\newcommand{\norm}[1]{\left\Vert#1\right\Vert}
\newcommand{\R}{{\mathbb{R}}}
\newcommand{\E}{{\mathbf E}}
\newcommand{\cA}{{\cal A}}
\newcommand{\cD}{{\cal D}}
\newcommand{\cF}{{\cal F}}
\newcommand{\cP}{{\cal P}}
\newcommand{\cS}{{\cal S}}
\newcommand{\vect}{\textsc{vec}} 
\newif \ifcomments \commentsfalse
\newcommand{\mehrdad}[1]{{\color{blue} Mehrdad: #1}}
\newcommand{\nevena}[1]{{\color{purple} Nevena: #1}}
\newcommand{\dong}[1]{{\color{brown} Dong: #1}}
\newcommand{\dilan}[1]{{\color{green} Dilan: #1}}
\newcommand{\chris}[1]{{\color{teal} Chris: #1}}
\newcommand{\nir}[1]{{\color{cyan} Nir: #1}}
\newcommand{\dale}[1]{{\color{orange} Dale: #1}}
\newcommand{\mehrdad}[1]{}
\newcommand{\nevena}[1]{}
\newcommand{\dong}[1]{}
\newcommand{\dilan}[1]{}
\newcommand{\chris}[1]{}
\newcommand{\nir}[1]{}
\newcommand{\dale}[1]{}
\newcounter{assumption}
\renewcommand{\theassumption}{A\arabic{assumption}}
\newenvironment{assumption}[1][]{\begin{trivlist}\item[] \refstepcounter{assumption}%
 {\bf Assumption\ \theassumption\ {\em (#1)} } }{
 \ifvmode\smallskip\fi\end{trivlist}}
 \newtheorem{theorem}{Theorem}[section]
\newtheorem{lemma}[theorem]{Lemma}
\newtheorem{corollary}[theorem]{Corollary}
\newcounter{remark}
\renewcommand{\theremark}{R\arabic{remark}}
\DeclareMathOperator*{\argmin}{arg\,min}
\title{A maximum-entropy approach to off-policy evaluation in average-reward MDPs}
\author{Nevena Lazi\'c\thanks{DeepMind} \And Dong Yin\footnotemark[1] \And Mehrdad Farajtabar\footnotemark[1] \And Nir Levine\footnotemark[1] \And Dilan G\"or\"ur\footnotemark[1] \And Chris Harris\thanks{Google} \And Dale Schuurmans\footnotemark[2]}
\begin{document}

\maketitle

\begin{abstract}
    This work focuses on off-policy evaluation (OPE) with function approximation in infinite-horizon undiscounted Markov decision processes (MDPs). 
    For MDPs that are ergodic and linear (i.e. where rewards and dynamics are linear in some known features), we provide the first finite-sample OPE error bound, extending
    existing results beyond the episodic and discounted cases.
    In a more general setting, when the feature dynamics are approximately linear and for arbitrary rewards, we propose a new approach for estimating stationary distributions with function approximation. We formulate this problem as finding the maximum-entropy distribution subject to matching feature expectations under empirical dynamics. We show that this results in an exponential-family distribution whose sufficient statistics are the features, paralleling maximum-entropy approaches in supervised learning.  
    We demonstrate the effectiveness of the proposed OPE approaches in multiple environments. 
\end{abstract}

\section{Introduction}

Recently, there have been considerable advances in reinforcement learning (RL), with algorithms achieving impressive performance on game playing and simple robotic tasks. Successful approaches typically learn through direct (online) interaction with the environment. However, in many real applications, access to the environment is limited to a fixed dataset, due to considerations of cost, safety, or time. One key challenge in this setting is off-policy evaluation (OPE): the task of evaluating the performance of a target policy given samples collected by a behavior policy.


The focus of our work is OPE in infinite-horizon undiscounted MDPs,  which capture long-horizon tasks such as game playing, routing, and the control of physical systems.
Most recent state-of-the-art OPE  methods for this setting
estimate the ratios of stationary distributions of the target and behavior policy \citep{liu2018breaking, nachum2019dualdice, wen2020batch, nachum2020reinforcement}.
These approaches typically produce estimators that are consistent, but have no finite-sample guarantees, and even the existing guarantees may not hold with function approximation.
One exception is the recent work of \citet{duan2020minimax}, which relies on linear function approximation. They assume that the MDP is linear (i.e. that rewards and dynamics are linear in some known feature space) and analyze OPE in episodic and discounted MDPs when given a fixed dataset of i.i.d. trajectories. They establish a finite-sample instance-dependent error upper bound for regression-based fitted Q-iteration (FQI), and a nearly-matching minimax lower bound. 

Our work extends the results of \citet{duan2020minimax} to the setting of undiscounted ergodic linear MDPs and  non-i.i.d. data (coming from a single trajectory). We provide the first finite-sample OPE error bound for this case; our bound scales similarly to that of \citet{duan2020minimax}, but depends on the MDP mixing time rather than horizon or discount. 
We are not aware of any similar results for off-policy evaluation in average-reward MDPs.
Indeed, while OPE with linear function approximation has been well-studied for discounted MDPs \citep{geist2014off, dann2014policy, yu2010convergence}, in the undiscounted setting even showing convergence of standard methods presents some difficulties (see the discussion in \cite{yu2010tr} for more details). 


Beyond linear MDPs, we consider MDPs in which rewards are non-linear, while the state-action dynamics are still  (approximately) linear in some features.
Here we propose a novel approach for estimating stationary distributions with function approximation: we maximize the distribution entropy subject to matching feature expectations under the empirical dynamics. Interestingly, this results in an exponential family distribution whose sufficient statistics are the features, paralleling  the well-known maximum entropy approach to supervised learning \citep{jaakkola2000maximum}. We demonstrate the effectiveness of our proposed OPE approaches in multiple environments.




\section{Preliminaries}\label{sec:preliminaries}

{\bf Problem definition.} We are interested in learning from batch data in infinite-horizon ergodic  Markov decision processes (MDPs). An MDP is a tuple $(\cS, \cA, r, P)$, where $\cS$ is the state space, $\cA$ is the action space, $r : \cS \times \cA \rightarrow \R$ is the reward function, and $P : \cS \times \cA \rightarrow \Delta_\cS$ is the transition probability function. 
For ease of exposition, we will assume that states and actions are discrete, but similar ideas apply to continuous state and action spaces.
A policy $\pi : \cS \rightarrow \Delta_\cA$ is a mapping from a state to a distribution over actions.  We will use $\Pi_\pi$ to denote the transition kernel from a state-action pair $(s, a)$ to the next pair $(s', a')$ under $\pi$.
In an ergodic MDP, every policy induces a single recurrent class of states, i.e. any state can be reached from any other state.
The expected average reward of a policy is defined as
\begin{align*}
J_\pi = \lim_{T \rightarrow \infty} \E \left[ \frac{1}{T} \sum_{t=1}^T r(s_t, a_t) \right] \quad {\rm where} \; 
s_{t+1} \sim P(\cdot | s_t, a_t) \;{\rm and} \; a_t \sim \pi(\cdot | s_t)  \,.
\end{align*}
Assume we are given a trajectory of $T$ transitions $\cD_\beta = \{(s_t, a_t, r_t )\}_{t=1}^{T+1}$ generated by a behavior policy $\beta$ in an unknown MDP. 
The \emph{off-policy evaluation} problem is the task of estimating $J_\pi$ for a target policy $\pi$.  


{\bf Stationary distributions.} Let $\mu_\pi(s)$ be the stationary state distribution of a policy $\pi$, and let $d_\pi(s, a) = \mu_\pi(s) \pi(a|s)$ be the stationary state-action distribution. These distributions satisfy the flow constraint 
\begin{align}
\label{eq:flow}
\mu_\pi(s') = \sum_{a'} d_\pi(s', a') = \sum_{s, a} d_\pi(s, a) P(s'|s, a) \,.
\end{align}
The expected average reward can equivalently be written as $J_\pi = \E_{(s, a) \sim d_\pi} [ r(s, a) ]$. Thus, one approach to learning in MDPs from batch data involves estimating  or optimizing $d_\pi$ subject to \eqref{eq:flow}. In particular, given data sampled from $d_\beta$ and distribution estimates $\widehat d_\pi$ and $\widehat d_\beta$, we can estimate $J_\pi$ as 
$
\widehat J_\pi =\frac{1}{T}  \sum_{t=1}^T  \frac{\widehat d_\pi (s_t, a_t) }{\widehat d_\beta (s_t, a_t)} r_t
$, as proposed by \citet{liu2018breaking}.

{\bf Linear MDPs.}  When the state-action space is large or continuous-valued, a common approach to evaluating or optimizing a policy is to use function approximation.  
Define the conditional transition operator $\cP^\pi$ of a policy $\pi$ as 
\begin{align}
\cP^\pi f(s, a) := \E_{s' \sim P(\cdot |s, a), a' \sim \pi(\cdot|s')} [ f(s', a') | s, a] \,.
\label{eq:cond_trans}
\end{align}
With function approximation, it is convenient to assume that for any policy, $\cP^\pi$ operates within a particular function class $\cF$, i.e. for any $f \in \cF$, $\cP^\pi f \in \cF$  \citep{duan2020minimax}.  We will assume that $\cF$ is the set of functions linear in some (known or pre-learned) features $\phi(s, a) \in \R^m$, such that for some matrix $M_\pi \in \R^{m \times m}$, 
\begin{align}
\label{eq:linear}
\cP^\pi \phi(s, a) = \sum_{s', a'} \pi(a' | s') P(s' | s, a) \phi(s', a')^\top =  \phi(s, a)^\top  M_\pi  + b_\pi^\top \,.
\end{align}
Note that, unlike existing work, we specifically include a bias term $b_\pi$ in the above model. When $\phi(s, a)$ is a binary indicator vector for $(s, a)$, $M_\pi$ corresponds to the state-action transition matrix and $b_\pi = 0$. However, $b_\pi$ is non-zero in other settings, such as MDPs with linear-Gaussian dynamics. 
Similarly to \citet{duan2020minimax}, we will assume that rewards $r(s, a)$ are linear in the same features: $r(s, a) = \phi(s, a)^\top w$. This assumption will be required for the purpose of analysis.


\section{Off-policy evaluation}\label{sec:ope}

\subsection{Maximum-entropy stationary distribution estimation} 
Given a policy $\pi(a|s)$, in order to compute an off-policy estimate of $J_\pi$, we only need to estimate the stationary state distribution $\mu_\pi(s)$.  We formulate this as a maximum-entropy problem subject to matching feature expectations:
\begin{align}
\label{eq:ope_algo}
\min_{\mu \in \Delta_{\cS}} & \; \sum_s \mu(s) \ln \mu(s) 
\\
{\rm s.t.}  & \; 
 \sum_{s', a'} \mu_\pi(s') \pi(a'|s') \phi(s', a') = \sum_{s, a} \mu_\pi(s) \pi(a|s) \sum_{s', a'} P(s'|s, a) \pi(a'|s') \phi(s', a') \,. \label{eq:flow_relaxed}
\end{align}
Note that we have relaxed the original flow constraint \eqref{eq:flow} over all state-action pairs to only require feature expectations to match, similarly to the maximum-entropy principle for supervised learning \citep{jaakkola2000maximum}. 
Furthermore, under the linear MDP assumption and given the model parameters $(M_\pi, b_\pi)$, the feature expectation constraint can be written as 
\begin{align}\label{eq:linear_2}
   \sum_{s} \mu_\pi(s) \phi(s, \pi)^\top (I - M_\pi) =  b_\pi^\top \,,
\end{align}
where $\phi(s, \pi) = \sum_a \pi(a|s) \phi(s, a)$ are feature expectations under the policy.
In Appendix~\ref{sec:ope_lagrangian}, we show that the optimal solution is an exponential-family distribution of the following form:
\begin{align}\label{eq:exp_family_form}
    \mu_\pi(s | \theta_\pi, M_\pi) & = \exp \left(  \phi(s, \pi)^\top (I - M_\pi) \theta_\pi - F(\theta_\pi | M_\pi) \right)
\end{align}
where $F(\theta_\pi | M_\pi)$ is the log-partition function. The parameters $\theta_\pi$ are the solution of the dual problem:
\begin{align}
    \theta_\pi = \argmin_{\theta} D(\theta ) := F(\theta | M_\pi) - \theta^\top b_\pi \,.
\label{eq:theta_optization}
\end{align}
Note that the dual is convex, due to the convexity of the log-partition function in exponential families.
Given a batch of data,
we estimate the stationary distribution $\mu_\pi$ by first estimating $\widehat M_\pi$ and $\hat b_\pi$ using linear regression (see \eqref{eq:m_estimation}), and then computing a parameter estimate  as 
$
\widehat \theta_\pi = \argmin_\theta F(\theta | \widehat M_\pi) - \hat b_\pi^\top \theta
$.
When the log-partition function $F(\theta|\widehat M_\pi)$ is intractable, we can optimize the dual using stochastic gradient descent. Noting that $\nabla_\theta F(\theta | M_\pi) = \E_{\mu_\pi}[ (I - M_\pi^\top) \phi(s, \pi)]$, we can obtain an (almost) unbiased gradient estimate using importance weights: 
\begin{align}
    \widehat{\nabla}_\theta F(\theta | \widehat M_\pi) 
    &\propto \sum_{s \in \cD_\beta} \frac{\hat \mu_\pi(s|\theta, \widehat M_\pi)}{ \hat \mu_\beta(s | \widehat \theta_\beta, \widehat M_\beta)}  (I - \widehat M_\pi^\top)\phi(s, \pi)
    \label{eq:grad_f}
\end{align}
where $\hat \mu_\beta(s| \widehat \theta_\beta, \widehat M_\beta)$ is an estimate of the stationary distribution of the behavior policy computed using the same approach (we assume that the behavior policy is known and otherwise estimate it from the data).
Finally, we evaluate the policy as
\[
\widehat J_\pi = \sum_{t=1}^T \rho_t r_t \, \quad \text {where } \rho_t = \frac{\hat \mu_\pi(s_t) \pi( a_t|s_t)}{\hat \mu_\beta(s_t) \beta(a_t|s_t)}.
\]
In practice, it may be beneficial to normalize the distribution weights $\rho_t$ to sum to 1, known as weighted importance sampling \citep{rubinsteinsimulation,koller2009probabilistic, mahmood2014weighted}. This results in an estimate that is biased but consistent, and often of much lower variance; the same technique can be applied to the gradient weights following \citet{chen2018stochastic}. When 
the log-normalizing constant is intractable, we can normalize the distributions empirically.

\paragraph{Linear rewards.} 
When the rewards are linear in the features, $r(s, a) = \phi(s, a)^\top w$, and $b_\pi \neq {\bf 0}$, there is a faster way to estimate $J_\pi$. Noting that since $J_\pi = \sum_{s, a} d_\pi(s, a)\phi(s, a)^\top w $, we only need to estimate $e_\pi := \sum_{s, a} d_\pi(s, a)\phi(s, a)$ rather than the full distribution. Under the linear MDP assumption,
 $e_\pi^\top = b_\pi^\top (I - M_\pi)^{-1}$. Thus,  given estimates of the model and reward parameters $\widehat M_\pi, \hat b_\pi, \hat w$, we can evaluate the policy as
\begin{align}
\label{eq:closed_form}
    \widehat J_\pi = \hat b_\pi^\top (I - \widehat M_\pi)^{-1} \hat w \,.
\end{align}

\subsection{OPE error analysis.} 
Our analysis requires the following assumptions.
\begin{assumption}[Mixing coefficient]
\label{ass:mixing}
There exists a constant $\kappa > 0$ such that for any state-action distribution  $d$,
\[
\norm{(d_\beta - d)^\top \Pi_\beta}_{1} \leq \exp(-1/\kappa) \norm{d_\beta - d}_1 
\]
where $\Pi_\beta$ is the transition matrix from $(s, a)$ to $(s', a')$ under the policy $\beta$.
\end{assumption}
\begin{assumption}[Bounded linearly independent features]
\label{ass:indep}
Let $\overline \phi(s, a)^\top := [\phi(s, a)^\top \; 1]$. We assume that $\max_{s, a} \norm{\overline \phi(s, a)}_2 \leq C_\Phi$ for some constant $C_\Phi$.  Let $\Phi$ be an $|\cS||\cA| \times (m + 1)$ matrix whose rows are feature vectors $\overline \phi(s, a)$. We assume that the columns of $\Phi$ are linearly independent. 
\end{assumption}
\begin{assumption}[Feature excitation]
\label{ass:features}
For a policy $\pi$ with stationary distribution $d_\pi(s, a)$, define $\Sigma_\pi = \E_{(s, a) \sim d_\pi} [ \overline \phi(s,a)\overline \phi(s, a)^\top ]$. We assume that $\lambda_{\min}(\Sigma_\beta) \geq \sigma > 0$ and  $\lambda_{\min}(\Sigma_\pi) \geq \sigma_\pi > 0$.
\end{assumption}
The above assumptions mean that the exploration policy $\beta(a|s)$ mixes fast and is exploratory, in the sense that the stationary distribution spans all dimensions of the feature space. These assumptions allow us to bound the model error. We also require the evaluated policy to span the feature space for somewhat technical reasons, in order to bound the policy evaluation error. 

Assume that rewards are linear in the features, $r(s, a) =  \phi(s, a)^\top w$. Given a trajectory $\{(s_t, a_t, r_t)\}_{t=1}^{T+1}$, we estimate $M_\pi$, $b_\pi$, and $w$ using regularized least squares:
\begin{align}
    \label{eq:m_estimation}
    \begin{bmatrix} \widehat M_\pi \vspace{3pt} \\ \hat b_\pi^\top \end{bmatrix}
    &= \bigg(\Lambda + \sum_{t=1}^T \overline \phi(s_t, a_t) \overline \phi(s_t, a_t)^\top \bigg)^{-1} \sum_{t=1}^T \overline \phi(s_t, a_t) \phi(s_{t+1}, \pi)^\top \\
     \hat w &= \bigg(\sum_{t=1}^T  \phi(s_t, a_t)  \phi(s_t, a_t)^\top \bigg)^{-1} \sum_{t=1}^T  \phi(s_t, a_t) r_t
\end{align}
where $\Lambda$ is a regularizer and $\overline \phi(s, a) =\left[ \begin{smallmatrix}\phi(s, a)\\ 1 \end{smallmatrix} \right]$.  For the purpose of simplifying the analysis, we let $\Lambda = \alpha \sum_{t=1}^T \overline\phi(s_t, a_t)\overline\phi(s_t, a_t)^\top$; in practice it may be better to use a diagonal matrix.
Let $W_\pi = \left[ \begin{smallmatrix} M_\pi & {\bf 0} \\ b_\pi^\top & 1 \end{smallmatrix} \right]$ and similarly $\widehat W_\pi = \left[ \begin{smallmatrix} \widehat M_\pi & {\bf 0} \\ \hat b_\pi^\top & 1 \end{smallmatrix} \right]$. 
The following Lemma (proven in Appendix~\ref{app:model_error}) bounds the estimation error under Assumptions ~\ref{ass:mixing} and \ref{ass:features} for single-trajectory data:
\begin{lemma}
\label{lemma:model_error}
Let assumptions \ref{ass:mixing}, \ref{ass:indep}, and \ref{ass:features} hold, and let $\alpha = C_\Phi^2 \sigma^{-1}\kappa /  \sqrt{T}$. Then with probability at least $1-\delta$, for constants $C$ and $C_w$,
\begin{align*}
\norm{\widehat W_\pi - W_\pi}_2 &\leq 
C C_\Phi^4 \kappa \sigma^{-2} \sqrt{2 \ln (2(m+1) / \delta) / T} \\
\norm{w - \hat w}_2 &\leq C_w C_\Phi^2 \kappa \sigma^{-2} \sqrt{2 \ln (2m / \delta) / T} \norm{w}_2 \,.
\end{align*}
\end{lemma}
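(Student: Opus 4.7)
The plan is to decompose each regression estimator into a regularization bias plus a martingale-noise term, and then bound each using Assumptions~\ref{ass:mixing}--\ref{ass:features}. The convenient algebraic fact is that, with $\Lambda = \alpha \sum_t \overline\phi(s_t,a_t) \overline\phi(s_t,a_t)^\top$, the ridge matrix $V_T := \Lambda + \sum_t \overline\phi_t\overline\phi_t^\top = (1+\alpha)\sum_t \overline\phi_t\overline\phi_t^\top$ satisfies $V_T^{-1}\Lambda = \tfrac{\alpha}{1+\alpha} I$. Define the off-policy residual $\xi_t := \phi(s_{t+1},\pi) - \cP^\pi\phi(s_t,a_t)$; by definition this is a zero-mean martingale difference with respect to the natural filtration of the trajectory, bounded in Euclidean norm by $2C_\Phi$. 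A standard ridge manipulation then yields
\[
\widehat W_\pi - W_\pi \;=\; -\tfrac{\alpha}{1+\alpha}\,W_\pi \;+\; \tfrac{1}{1+\alpha}\Bigl(\textstyle\sum_t \overline\phi_t\overline\phi_t^\top\Bigr)^{-1}\textstyle\sum_t \overline\phi_t \,\xi_t^\top,
\]
where both sides are padded with a zero last column so the dimensions match the $(m+1)\times(m+1)$ form used in the paper.

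The bias term is bounded by $\alpha\,\|W_\pi\|_2$, and $\|W_\pi\|_2$ is a constant one controls using $\|\overline\phi(s,a)^\top W_\pi\|_\infty \leq C_\Phi$ together with $\lambda_{\min}(\Sigma_\beta)\geq\sigma$. For the variance, I would bound the numerator $\|\sum_t \overline\phi_t \xi_t^\top\|_2$ via matrix Azuma (each term has spectral norm at most $2C_\Phi^2$), obtaining order $C_\Phi^2\sqrt{T\log((m+1)/\delta)}$, and lower-bound $\lambda_{\min}(\sum_t \overline\phi_t\overline\phi_t^\top)$ by concentrating the empirical covariance around $T\Sigma_\beta$. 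This latter step is the main obstacle, because the samples come from a single non-i.i.d.\ trajectory. I would attack it by combining the geometric mixing in Assumption~\ref{ass:mixing} with matrix concentration in one of two ways: (i) block the trajectory into $\sim T/\kappa$ approximately independent segments and apply matrix Bernstein to the block averages, or (ii) decompose $\sum_t(\overline\phi_t\overline\phi_t^\top - \Sigma_\beta)$ into a martingale part (handled by matrix Azuma) plus a geometrically small mixing residual whose control comes directly from the contraction in Assumption~\ref{ass:mixing}. Either route should give $\lambda_{\min}(\sum_t \overline\phi_t\overline\phi_t^\top)\gtrsim \sigma T/\kappa$ with high probability once $T$ is large enough.

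Combining the two estimates produces a noise term of order $C_\Phi^2\kappa\sigma^{-1}\sqrt{\log((m+1)/\delta)/T}$, and the stated choice $\alpha = C_\Phi^2\sigma^{-1}\kappa/\sqrt{T}$ balances it against the bias, yielding the announced $C_\Phi^4\kappa\sigma^{-2}\sqrt{\log((m+1)/\delta)/T}$ scaling (the extra $C_\Phi^2/\sigma$ comes from the upper bound on $\|W_\pi\|_2$ and from the sample-size threshold required for the eigenvalue lower bound). The reward estimator follows the same template in a simpler form: with $\epsilon_t := r_t - \phi(s_t,a_t)^\top w$ a martingale difference bounded by $2C_\Phi\|w\|_2$, one has $\hat w - w = (\sum_t \phi_t\phi_t^\top)^{-1}\sum_t \phi_t\epsilon_t$, and the same matrix-Azuma and minimum-eigenvalue ingredients deliver the $\|w\|_2$-proportional bound. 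A union bound over the two concentration events then gives the lemma with probability at least $1-\delta$.
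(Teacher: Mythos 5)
Your proposal is correct in substance but follows a genuinely different route from the paper. You use the standard ridge bias--variance decomposition with the per-step regression residual $\xi_t = \phi(s_{t+1},\pi) - \cP^\pi\phi(s_t,a_t)$, which is a martingale difference with increments bounded by $2C_\Phi$ independently of mixing, so matrix Azuma handles the numerator $\sum_t \overline\phi_t\xi_t^\top$ with no $\kappa$, and mixing is needed only to relate the empirical covariance to $\Sigma_\beta$. The paper instead writes both $\widehat W_\pi$ and $W_\pi$ in closed form as $(\Phi^\top \tilde D_\beta \Phi)^{-1}\Phi^\top \tilde D_\beta \tilde\Pi_\pi\Phi$ versus $(\Phi^\top D_\beta\Phi)^{-1}\Phi^\top D_\beta \Pi_\pi\Phi$, splits the difference into a cross-moment perturbation $E_1$ and a covariance-inverse perturbation $E_2$, and concentrates both via Doob martingales over visit/pair counts whose increments are bounded by $4\kappa$ (so $\kappa$ enters both terms), with the inverse difference handled through a Woodbury manipulation rather than a minimum-eigenvalue bound on the empirical covariance. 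Two caveats on your version: first, your stated lower bound $\lambda_{\min}(\sum_t\overline\phi_t\overline\phi_t^\top)\gtrsim \sigma T/\kappa$ is not what either blocking or the martingale-plus-mixing decomposition delivers --- they give $\lambda_{\min}\geq T\sigma - O(C_\Phi^2\kappa\sqrt{T\ln((m+1)/\delta)})$, i.e.\ $\Omega(\sigma T)$ only once $T\gtrsim C_\Phi^4\kappa^2\sigma^{-2}\ln((m+1)/\delta)$; since the lemma is stated without such a burn-in, your argument either needs that sample-size condition made explicit or a separate argument that the bound is trivially true below the threshold (the paper's Woodbury route avoids inverting the empirical covariance and so avoids this, although it quietly uses a similar bound when controlling $\alpha\norm{\widehat W_\pi}_2$). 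Second, for the reward part, note that in this paper $r_t=\phi(s_t,a_t)^\top w$ exactly and deterministically, so your $\epsilon_t$ is identically zero and the only real issue is invertibility of the unregularized empirical covariance; if you intend noisy rewards, boundedness and conditional zero-mean of $\epsilon_t$ would need to be assumed. With the burn-in condition stated (or absorbed as above), your decomposition yields the claimed $C_\Phi^4\kappa\sigma^{-2}\sqrt{\ln((m+1)/\delta)/T}$ rate, with the bias term $\alpha\norm{W_\pi}_2\leq \alpha C_\Phi^2\sigma^{-1}$ matching the paper's accounting of the regularizer.
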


The following theorem bounds the policy evaluation error.
\begin{theorem} [Policy evaluation error]
\label{thm:ope_error}
Let assumptions \ref{ass:mixing}, \ref{ass:indep}, and \ref{ass:features} hold and assume that problem \eqref{eq:ope_algo}-\eqref{eq:flow_relaxed} is feasible. Then, for a constant $C_J$, with probability at least $1-\delta$, the batch policy evaluation error is bounded as
\begin{align}
   |J_\pi - \widehat J_\pi| \leq
 C_J C_\Phi^4 \kappa \sigma_\pi^{-1/2} \sigma^{-2}(1 + \alpha)^2
  \sqrt{2 \ln (2(m+1) / \delta) / T} \norm{w}_2  \,.
\end{align}
\end{theorem}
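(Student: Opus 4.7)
The plan is to use the closed-form estimator $\widehat J_\pi = \hat b_\pi^\top (I - \widehat M_\pi)^{-1} \hat w$ (valid under the linear reward assumption), decompose the evaluation error so that Lemma~\ref{lemma:model_error} handles the statistical parts, and then control a single deterministic factor measuring the conditioning of the linear-system $e_\pi^\top (I-M_\pi)=b_\pi^\top$. Writing $e_\pi^\top = b_\pi^\top(I-M_\pi)^{-1}$ and $\hat e_\pi^\top = \hat b_\pi^\top(I-\widehat M_\pi)^{-1}$, I would split
\begin{align*}
J_\pi - \widehat J_\pi \;=\; (e_\pi - \hat e_\pi)^\top w \;+\; \hat e_\pi^\top(w - \hat w),
\end{align*}
so that the second term is immediately bounded by $\norm{\hat e_\pi}_2 \norm{w-\hat w}_2$ using the reward-error bound from Lemma~\ref{lemma:model_error}.

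For the first term, I would apply the standard resolvent identity
\begin{align*}
e_\pi^\top - \hat e_\pi^\top \;=\; (b_\pi - \hat b_\pi)^\top(I-\widehat M_\pi)^{-1} \;+\; e_\pi^\top(\widehat M_\pi - M_\pi)(I-\widehat M_\pi)^{-1},
\end{align*}
and then take norms. Three quantities need to be controlled: $\norm{e_\pi}_2$, which is at most $C_\Phi$ by Jensen applied to $\bar e_\pi = \E_{d_\pi}[\bar\phi(s,a)]$; $\norm{\widehat W_\pi - W_\pi}_2$, which Lemma~\ref{lemma:model_error} already provides; and $\norm{(I-\widehat M_\pi)^{-1}}_2$, the only factor not directly supplied.

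For the inverse norm, I would pass through the $\Sigma_\pi$-weighted geometry. Using the identity $\bar\phi(s,a)^\top W_\pi = (\cP^\pi \bar\phi)(s,a)^\top$ and Jensen's inequality for $\cP^\pi$ gives $W_\pi^\top \Sigma_\pi W_\pi \preceq \Sigma_\pi$; combined with $\Sigma_\pi \succeq \sigma_\pi I$ from Assumption~\ref{ass:features}, $W_\pi$ is a $\Sigma_\pi$-non-expansion, and on the complement of the null direction $\bar e_\pi$ of $I-W_\pi$ the inverse has $\Sigma_\pi$-operator norm bounded by a constant, which converts to Euclidean norm at the cost of a $\sigma_\pi^{-1/2}$ factor. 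A perturbation argument then transfers the bound to $\widehat W_\pi$, where the extra $(1+\alpha)^2$ factor appears because $\widehat M_\pi$ is computed from the ridge-regularized normal equations with $\Lambda = \alpha\sum_t \bar\phi\bar\phi^\top$, which inflates $(I-\widehat M_\pi)^{-1}$ relative to its unregularized counterpart by $1+\alpha$ on each side of the resolvent identity.

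The hard part will be step three: because $I-W_\pi$ is singular (with $\bar e_\pi$ in its left null space), the perturbation argument must carefully project out that direction, either by enforcing the normalization $(\bar e_\pi)_{m+1} = 1$ and using a pseudoinverse bound, or by arguing directly that $I-M_\pi$ is well-conditioned because $b_\pi\neq\mathbf 0$ together with ergodicity and feature excitation rules out $1$ as an eigenvalue of $M_\pi$ at a quantitative rate governed by $\sigma_\pi$. Once that deterministic bound is in hand, putting the two error pieces together with Lemma~\ref{lemma:model_error} and a union bound over the model- and reward-error events yields the stated high-probability guarantee, absorbing all features into the constant $C_J$.
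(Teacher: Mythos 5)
There is a genuine gap, and it sits exactly where you flag it: your route requires a quantitative bound on $\norm{(I-\widehat M_\pi)^{-1}}_2$ (or on a pseudoinverse after projecting out the null direction of $I-W_\pi$), and the assumptions of the theorem do not supply one. The $\Sigma_\pi$-weighted Lyapunov identity $\Sigma_\pi = W_\pi^\top \Sigma_\pi W_\pi + V$ with $V \succeq 0$ only gives \emph{non-expansiveness}, $\norm{\Sigma_\pi^{1/2} W_\pi \Sigma_\pi^{-1/2}}_2 \le 1$; nothing in Assumptions A1--A3 controls the spectrum of $M_\pi$ away from $1$ (A1 constrains the mixing of $\beta$, not of the target $\pi$), so $\norm{(I-M_\pi)^{-1}}_2$ can be arbitrarily large while $C_\Phi, \sigma, \sigma_\pi, \kappa$ stay fixed, and the theorem's bound contains no quantity that could absorb it. The same problem contaminates your second term: with $\hat e_\pi^\top = \hat b_\pi^\top (I-\widehat M_\pi)^{-1}$, even $\norm{\hat e_\pi}_2$ is not controlled without the inverse bound. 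Your fallback ("$b_\pi \neq \mathbf{0}$ plus ergodicity and feature excitation rule out $1$ as an eigenvalue at a rate governed by $\sigma_\pi$") is not something that can be extracted from the stated assumptions.

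The paper avoids inverting $I-\widehat M_\pi$ altogether, and this is the key difference. It analyzes the estimator built from a \emph{feasible} solution of \eqref{eq:ope_algo}--\eqref{eq:flow_relaxed} (that is the role of the feasibility hypothesis and of Remark 1), so $\hat e^\top = \sum_{s,a}\hat d_\pi(s,a)\overline\phi(s,a)^\top$ is the feature expectation of an actual distribution, giving $\norm{\hat e}_2 \le C_\Phi$ for free, and it satisfies the fixed-point relation $\hat e^\top = \hat e^\top \widehat W_\pi$ (likewise $e^\top = e^\top W_\pi$). Instead of a resolvent identity, the proof unrolls these fixed points $K$ times and lets $K \to \infty$, writing the first error term as $\bigl(\hat e^\top (W_\pi^\alpha - \widehat W_\pi) + \alpha e^\top W_\pi^\alpha\bigr)\sum_{i\ge 0}(W_\pi^\alpha)^i \underline w$ with $W_\pi^\alpha = (1+\alpha)^{-1}W_\pi$. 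The regularization makes $W_\pi^\alpha$ a strict $\Sigma_\pi$-weighted contraction with modulus $(1+\alpha)^{-1}$, so the geometric series is summable with norm at most $C_\Phi\sigma_\pi^{-1/2}(1+\alpha)$ (this is where $\sigma_\pi^{-1/2}$ and part of the $(1+\alpha)$ factors enter), while $\norm{W_\pi^\alpha - \widehat W_\pi}_2$ is exactly what \eqref{eq:regularized_error} in Lemma~\ref{lemma:model_error} controls, and the regularization bias $\alpha e^\top W_\pi^\alpha$ is of order $\alpha = C_\Phi^2\sigma^{-1}\kappa/\sqrt{T}$. Your top-level decomposition, the use of Lemma~\ref{lemma:model_error}, the bound $\norm{e}_2 \le C_\Phi$, and the $\Sigma_\pi$-weighted geometry are all aligned with the paper; what is missing is the replacement of the explicit inverse by the feasibility-based fixed point plus the regularized geometric series, which is precisely the device that makes the bound provable under the stated assumptions.
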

The proof is given in Appendix~\ref{app:ope_error} and relies on expressing evaluation error in terms of the model error, as well as on the contraction properties of the matrix $(1+\alpha)^{-1}W_\pi$. While we do not provide a lower bound, note that the error scales similarly to the results of \cite{duan2020minimax} for discounted MDPs, which nearly match the corresponding lower bound. 

{\bf Remark 1}. Theorem~\ref{thm:ope_error} holds for any feasible solution $\mu$ of \eqref{eq:ope_algo} and not necessarily just for the maximum-entropy distribution.

{\bf Remark 2}. 
Our results are shown for the case of discrete states and actions and bounded-norm features. In the continuous case, similar conclusions would follow by arguments on the concentration and boundedness of $\Sigma_\beta$ and $\E_{d_\beta, P}[\overline\phi(s, a) \overline \phi(s', \pi)^\top]$.

\subsection{Policy improvement}

The previous sections provides an approach for estimating the average reward, but it is unclear how to perform policy optimization. One possible formulation is to maximize the entropy-regularized expected reward:
\begin{align*}
    \max_{d_\pi \in \Delta^{\cS \times \cA}} &\; \sum_{s, a} d_\pi(s, a) (r(s, a) - \tau \ln d_\pi(s, a))  
    \quad {\rm s.t.} \; \sum_{s, a} d(s, a) \phi(s, a)^\top (I - \widehat M_\pi) = \hat b_\pi \,.
\end{align*}
Unfortunately, this is no longer a convex problem, as the model $(\widehat M_\pi, \hat b_\pi)$ depends on the optimization variables through $\pi(a|s) = \frac{d(s, a)}{\sum_{a'} d(s, a')}$. We describe some ways around this in Appendix~\ref{app:policy_opt}. 

An alternative formulation is to construct a critic for the purpose of policy improvement. 
Let $Q_\pi(s, a)$ be the state-action value function of a policy, corresponding to the value of taking action $a$ in state $s$ and then following the policy forever, and satisfying the Bellman equation
\begin{align}
Q_\pi(s, a) + J_\pi &= r(s, a) + \sum_{s', a'} P(s' |s, a) \pi(a'|s') Q_\pi(s', a')
\end{align} 
The true $Q_\pi$ and $J_\pi$ minimize the Bellman error:
\begin{align}
L_{BE}(Q, J) = \E_{(s, a) \sim d_\pi}\left[(Q(s, a) + J - r(s, a) - \E_{s'\sim P(\cdot|s, a), a'\sim\pi(\cdot| s')}[Q(s', a')])^2 \right] \,.
\end{align}
In the off-policy case, the above expectation can be taken with respect to the stationary distribution of the behavior policy instead and importance-corrected if possible. 
Typically, the Bellman error cannot be minimized directly, as it includes an expectation over unknown $P$, and we only have access to a sample trajectory. However, under the linear MDP assumption, all $Q$-functions are linear, and thus we can estimate these expectations using the feature-dynamics model. Thus we have the following objective for fitting a critic with linear function approximation $Q_\pi(s, a) = \phi(s, a)^\top v_\pi$:
\begin{align}
    L_{MBE}(v, J | \widehat M_\pi, \hat b_\pi) &= \E_{(s, a) \sim d_\pi} [(\phi(s, a)^\top (I - \widehat M_\pi) v - \hat b_\pi^\top v + J - r(s, a))^2] 
\end{align}
Consider instead minimizing the absolute average Bellman error $|L_{AVG}(v, J)|$, where 
\begin{align}
    L_{AVG}(v, J) &:= \E_{(s, a) \sim d_\pi} [ \phi(s, a)^\top (I - \widehat M_\pi) v - b_\pi^\top v + J - r(s, a)] \,.
\end{align}
This error is minimized when $J = \E_{d_\pi}[r(s, a)]$ and $ \E_{d_\pi}[\phi(s, a)^\top (I - \widehat M_\pi) v] = b_\pi^\top v $. The second condition corresponds to driving the gradient of our dual objective to zero. 
The recent work of \citet{xie2020q} suggests that there are some theoretical advantages to minimizing the average rather than squared Bellman error in discounted MDPs. However, in the undiscounted case, it is unclear whether $v$ is useful for policy improvement, as the minimizer is not a function of the reward.
We leave further investigation of policy improvement in average-reward MDPs for future work.

\section{Related work}

The linear MDP assumption along with linear rewards implies that all value functions are linear. 
Thus we first discuss similarities between our approach and common linear action-value function methods in literature, and then give an broader overview of other related work. 

{\bf TD error.} 
The residual gradient algorithm of \citet{baird1995residual} minimizes the mean squared temporal difference (TD) error:
\begin{align}
\label{eq:td}
    L_{TD}(v, J) &= \frac{1}{T} \sum_{t=1}^T \left(\phi(s_t, a_t)^\top v + J - r_t - \phi(s_t', \pi)^\top v \right)^2
\end{align}
It is well-known that this objective is a biased and inconsistent estimate of the true Bellman error \citep{bradtke1996linear}. Correcting the bias requires double samples (two independent samples of $s'$ for the same $(s, a)$ pair), which may not be available in a single-trajectory dataset.
More recent methods rely on fixed point iterations, using the parameters from the previous iteration to construct regression targets. In our setting, fitted Q-iteration (FQI) can be written as
\begin{align}
\label{eq:fqi}
    v^{(k+1)}, J^{(k+1)} &= \argmin_{v, J} \sum_{t=1}^T (\phi(s_t, a_t)^\top v + J - r_t - \phi(s_t', \pi)^\top v^{(k)})^2
\end{align}
The convergence of FQI is guaranteed only in restricted cases (e.g. \cite{antos2008learning}), and no guarantees exist in the undiscounted setting to the best of authors' knowledge.

{\bf PBE error.} Another class of methods 
minimize the projected Bellman error, which corresponds to only the error representable by the features. The advantage of this approach is that the error due to not having exact dynamics expectations is not correlated with the TD error \citep{sutton2009fast}. 
Let $D_\beta = {\rm diag}(d_\beta)$, and let $Q_\pi$ be the action-value function of $\pi$ as a vector. 
In matrix form, the projected Bellman equation (PBE) can be written as
\begin{align}
Q_\pi & = G_\beta ( r - J_\pi {\bf 1} + \Pi_\pi Q_\pi), \quad \text{where } \;G_\beta = \Phi(\Phi^\top D_\beta \Phi)^{-1} \Phi^\top D_\beta  \label{eq:pbe}
\end{align}
where $\Phi$ excludes bias. 
Methods that attempt to solve (the sample-based version of) the above equation using fixed-point iteration may diverge if the matrix $G_\beta \Pi_\pi$ is not contractive (has spectral radius greater than or equal to 1). 
The contractiveness condition has been shown to hold in the on-policy setting ($\beta=\pi$) under mild assumptions by \citet{yu2009convergence}, but need not hold in general. 
Practical solutions may ensure convergence by regularizing $G_\beta$ as $G_\beta^b = \Phi(\Phi^\top D_\beta \Phi + bI)^{-1} \Phi^\top D_\beta$; however, the required bias $b$ may be large.
Another approach, popular in the discounted setting, is to minimize the PBE error using least squares methods like LSTD and LSPE \citep{yu2010convergence, dann2014policy, geist2014off}. 
A general complication in the average-reward case 
is that, unlike with discounted methods which only estimate $Q_\pi$, we also need to solve for $J_\pi$. One possible heuristic  is to initially use a guess for $J_\pi$. However, the resulting projected equation may not have a solution \citep{yu2010convergence, puterman2014markov}, and for OPE, $J_\pi$ is actually the quantity we are after. 

{\bf DualDICE} \citep{nachum2020reinforcement}. DualDICE and related methods  optimize the Lagrangian of the following linear programming (LP) formulation of the Bellman equation:
\begin{align*}
    \min_{J, Q} -J \quad {\rm s.t.} \;\; Q + J {\bf 1} \leq r + \Pi_\pi Q   \,.
\end{align*}
For the average reward case, linear function approximation $Q = \Phi v$, and a linear MDP satisfying $\Pi_\pi \Phi = \Phi W_\pi $, the problem Lagrangian is (see also \cite{zhang2020gendice,nachum2020reinforcement}):
\begin{align}
\label{eq:dice}
    \max_d \min_{J, v} &\; L(d, J, v) = -J + d^\top (\Phi (I- W_\pi) v + J) \,.
\end{align}
We solve an entropy-regularized version of the dual LP using a learned model $\widehat W_\pi$. This can be seen as a particular convex instantiation of DualDICE with function approximation - a linear feature model, linear value functions, and exponential-family stationary distributions. 


{\bf Dual approaches to batch RL.} Many recent methods for batch policy evaluation and optimization rely on estimating stationary distribution ratios  that (approximately) respect the MDP dynamics \citep{liu2018breaking, nachum2019dualdice, nachum2019algaedice, wen2020batch}. In particular, \cite{liu2018breaking} impose a similar constraint to ours on matching feature expectations. However, while we enforce the constraint for a particular feature representation, they minimize the squared error of violating the constraint while maximizing over smooth feature functions in a reproducing kernel Hilbert space.  
\cite{kernel_loss} minimize a kernel loss for solving the Bellman equation. 
We note that our approach can also be kernelized, by using kernel ridge regression in place of linear regression for the model. 
Most of the existing approaches yield consistent estimators, but have no finite-sample guarantees. 
One exception is the work of \citet{duan2020minimax}, which provides a minimax lower bound and nearly-matching finite-sample error bound in linear finite-horizon and discounted MDPs, given a dataset of i.i.d. trajectories. 
Under a similar linearity assumption, we provide a finite-sample OPE error bound for average-cost ergodic MDPs, and our approach only requires a single trajectory of the behavior policy. 


{\bf Maximum-entropy estimation.} The maximum-entropy principle has been well-studied in supervised learning (see e.g. \cite{jaakkola2000maximum}). There the objective is to maximize the entropy of a distribution subject to feature statistics matching on the available data, and the corresponding dual is maximum-likelihood estimation of an exponential family. In the batch RL setting, we maximize entropy subject to feature expectations matching under the MDP dynamics. For the linear MDP, the resulting distribution is also in the exponential family, and parameterized in a particular way that includes the model. 
Existing methods for modeling stationary distributions with function approximation tend to use linear functions and require extra constraints to ensure non-negativity and normalization \citep{cardoso2019large,abbasiyadkori2019largescale}. Exponential families seem like a more elegant solution, and also correspond to well-studied settings such as the linear quadratic regulator. 
\cite{pmlr-v97-hazan19a} proposed learning maximum-entropy stationary distributions for the purpose of exploration. They focused on the tabular MDP case, and required an oracle for solving planning problems with function approximation, since in that case the entropy maximization problem may not be convex. We provide a convex formulation of this problem with function approximation in the linear MDP setting, which can also be used with neural networks (by learning reperesentations).  

\section{Experiments}

We compare our approach to other policy evaluation methods relying on function approximation. Since our focus is not on learning representations, we experiment with a fixed linear basis. 
We evaluate fitted Q-iteration (\textsc{FQI}) implemented as in \eqref{eq:fqi} and Bellman residual minimization (\textsc{BRM}) implemented as in \eqref{eq:td}. We also use the average reward of the behavior policy as the simplest baseline. We refer to the closed-form version of our approach in \eqref{eq:closed_form} as \textsc{Model}, and to the version solving for the stationary distribution as \textsc{MaxEnt}.  We regularize the covariances of all  regression problems using $\alpha I$ with tuned $\alpha$.\footnote{Starting with $\alpha = 1$, we keep doubling $\alpha$ for FQI as long as it diverges, and for \textsc{MaxEnt} as long as $|\lambda_{\max}(\widehat W_\pi)| > 1$).} For \textsc{MaxEnt}, we optimize the parameters using full-batch Adam \citep{kingma2014adam},
and normalize the distributions empirically. For experiments with OpenAI Gym  environments \citep{openaigym} (Taxi and Acrobot), we additionally use weighted importance sampling \citep{mahmood2014weighted} for both the gradients and the objective. 
Unless stated otherwise, we generate policies by partially training on-policy using the \textsc{Politex} algorithm \citep{pmlr-v97-lazic19a}, a version of regularized policy iteration with linear Q-functions. We compute the true policy values $J_\pi$ using Monte-Carlo simulation for Acrobot, and exactly for other environments. Overall, we find that using a feature model is helpful with linear value-function methods.
 \begin{figure}[!t]
 \centering
\includegraphics[width=0.325\linewidth, trim=0cm 0.1cm 1cm 0cm, clip]{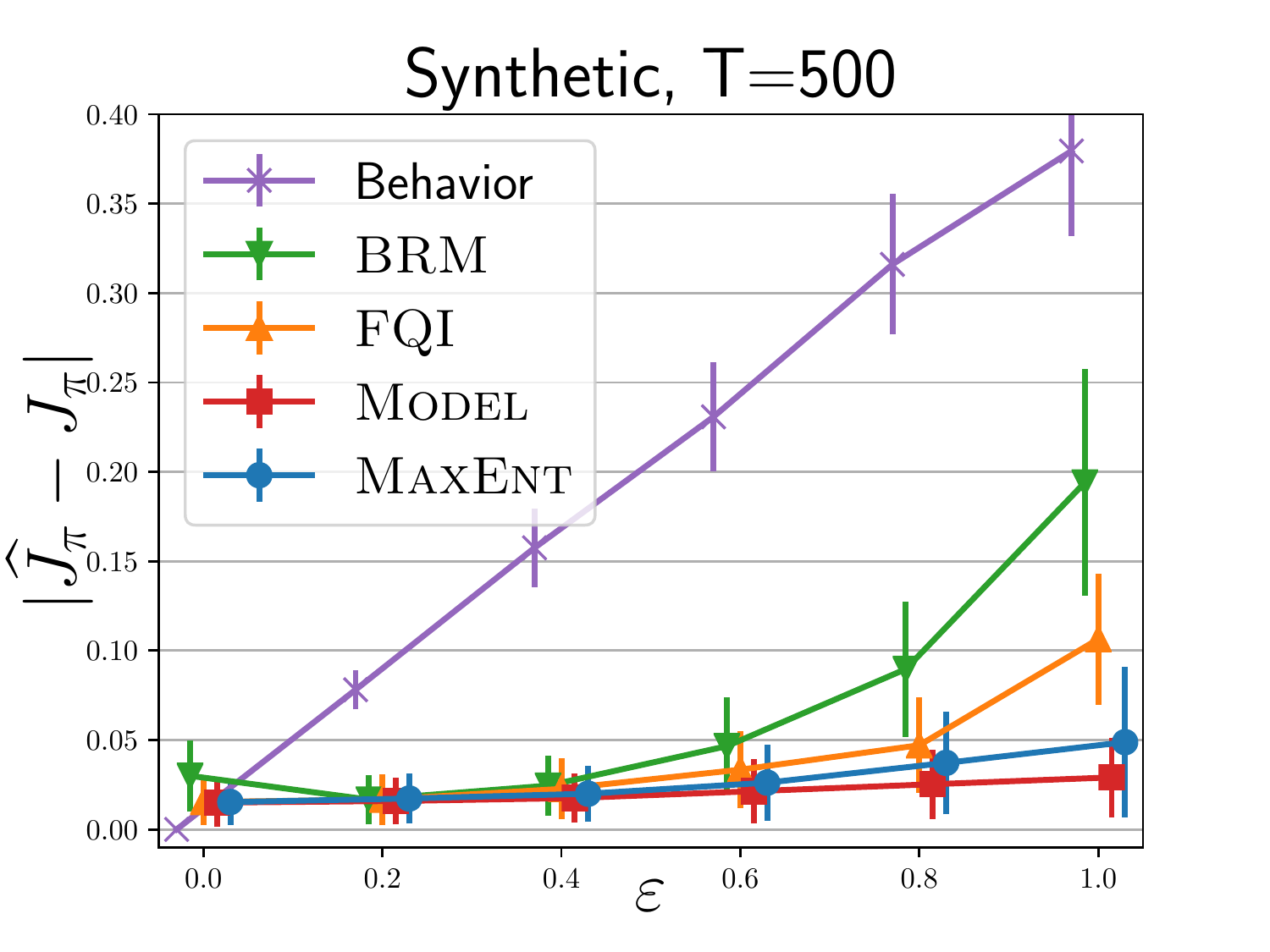}
\hspace{0.1cm}
\includegraphics[width=0.325\linewidth, trim=0cm 0.1cm 1cm 0cm, clip]{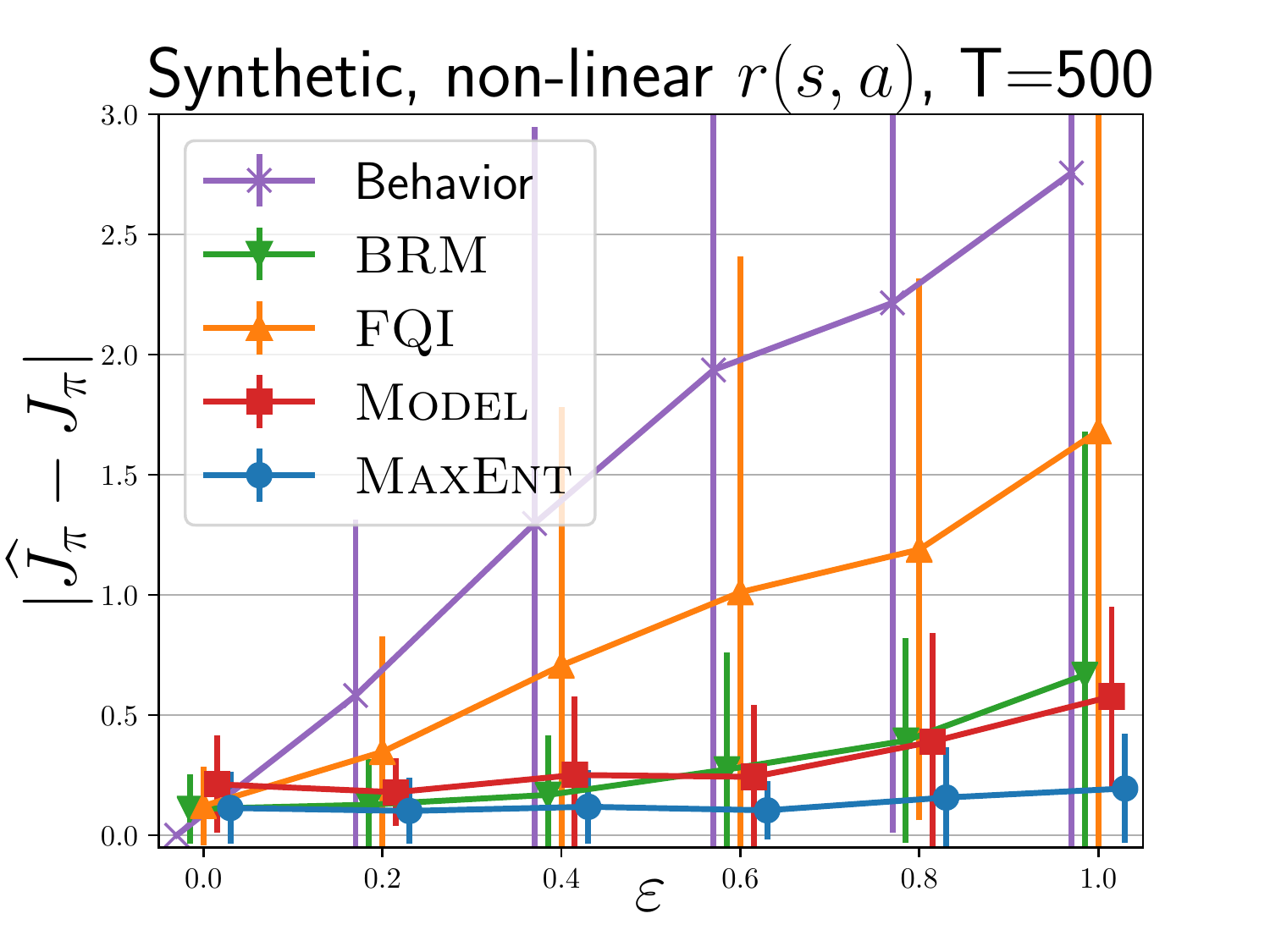}
\includegraphics[width=0.32\linewidth, trim=0cm 0.1cm 1cm 0cm, clip]{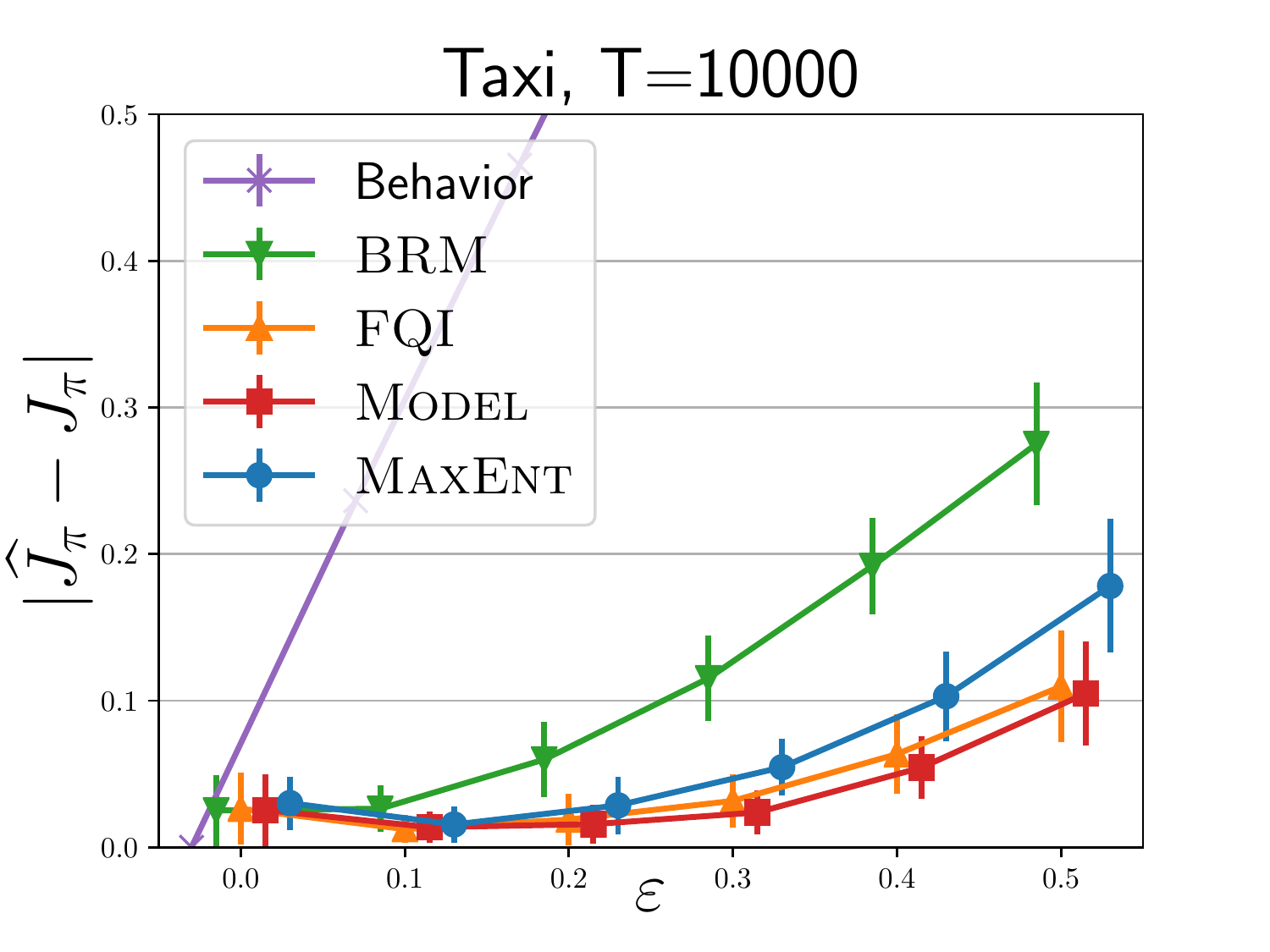}
\caption{Experiments with behavior policy $\varepsilon$-greedy w.r.t. $\pi$ on synthetic environments and Taxi (mean and standard deviation for 100 target policies $\pi$). Note that the plots are slightly shifted along the horizontal axis to make error bars easier to see. 
}
\label{fig:synthetic}
\end{figure}

{\bf Synthetic environments.} We generate synthetic MDPs with 100 states, 10 actions, and transition matrices $P$ generated by sampling entries uniformly at random and normalizing columns to sum to 1. We represent each state with a 10-dimensional vector $\phi_{\cS}(s)$ of random Fourier features \citep{rahimi2008random}, and let $\phi(s, a) = \phi_{\cS}(s) \otimes \phi_{\cA}(a)$, where $\phi_{\cA}(a)$ is a binary indicator vector for action $a$. We experiment with linear rewards $r(s, a) = -\phi(s, a)^\top w$ with entries of $w$ generated uniformly at random, and with non-linear rewards of the form $r(s, a) = -\exp(2\phi(s, a)^\top w)$.  We generate target policies $\pi$ by training on-policy, and set behavior policies $\beta$ to be $\varepsilon$-greedy with respect to $\pi$. We plot the evaluation error $|J_\pi - \widehat J_\pi|$  for several values of $\varepsilon$ in Figure \ref{fig:synthetic} (showing mean and standard deviation for 100 random MDPs for each $\varepsilon$).  We can see that the model-based approaches are less sensitive to the difference between $\pi$ and $\beta$, and the advantage of inferring the full distribution in the non-linear reward case. Note also that the true underlying dynamics are not low-rank, but our low-rank approximation still results in good estimates.

{\bf Taxi.} The Taxi environment \citep{dietterich2000hierarchical} is a $5 \times 5$ grid with four pickup/dropoff locations. Taxi actions include going left, right, up, and down, and picking up or dropping off a passenger. There is a reward of -1 for every step, a reward of -10 for illegal pickup/dropoff actions, and a reward of 20 for a successful dropoff. In the infinite-horizon version, a new passenger appears after a successful dropoff. Our state features include indicators for whether the taxi is empty / at pickup / at dropoff and their pairwise products, and xy-coordinates of the taxi, passenger, and dropoff. 
We set $\pi$ to be $0.05$-greedy w.r.t. a hand-coded optimal strategy, and $\beta$ to be $\varepsilon$-greedy w.r.t. $\pi$. A comparison of different policy evaluation methods is given in Figure~\ref{fig:synthetic}. In this case, all methods are somewhat affected by the suboptimality of the behavior policy, possibly due to fewer successful dropoffs, and \textsc{FQI} and \textsc{Model} perform best.

 \begin{figure}[!t]
 \centering
 \includegraphics[width=0.35\linewidth, trim=0cm 0.1cm 1cm 0cm, clip]{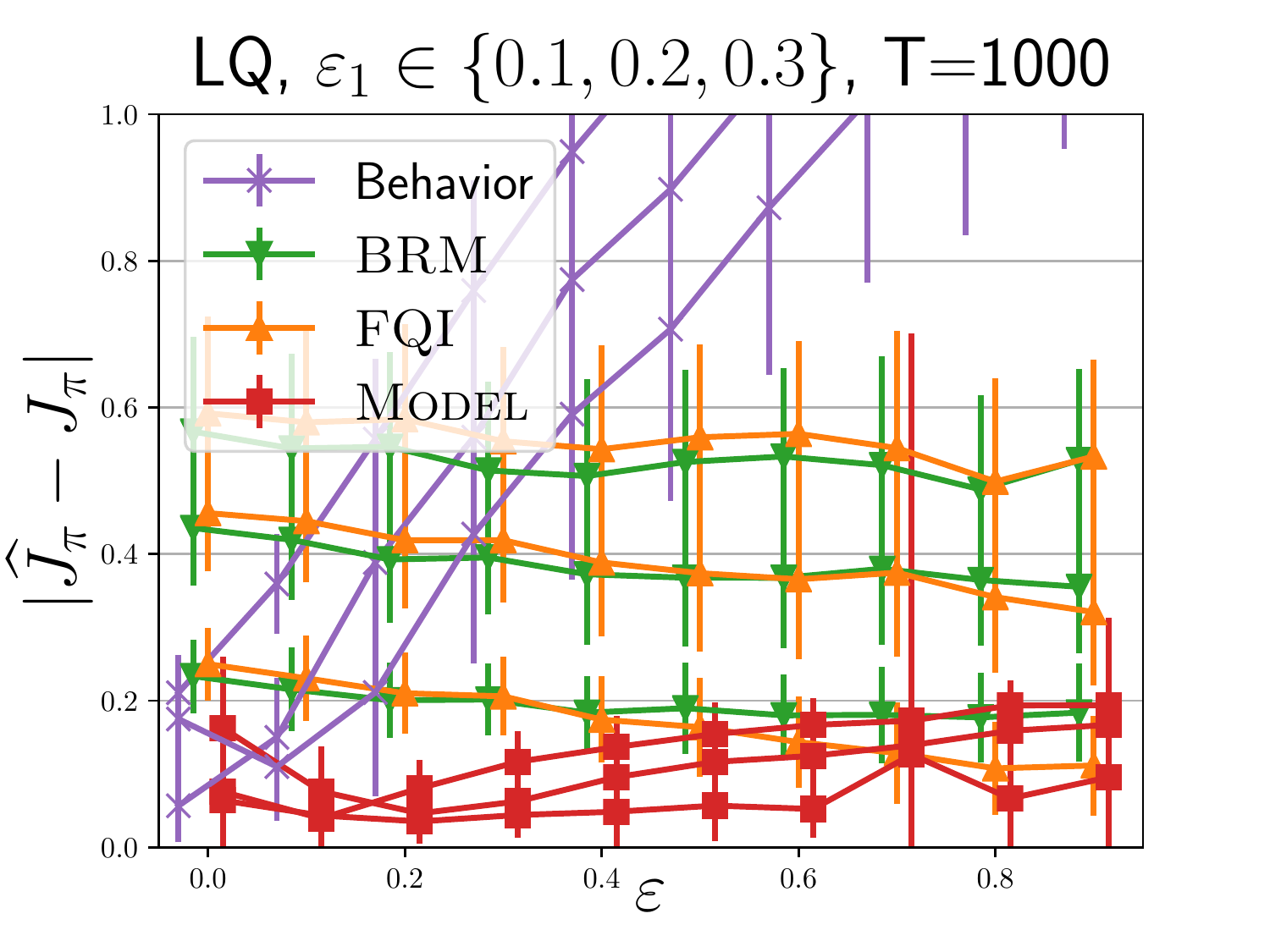}
\hspace{0.1cm}
 \includegraphics[width=0.35\linewidth, trim=0cm 0.1cm 1cm 0cm, clip]{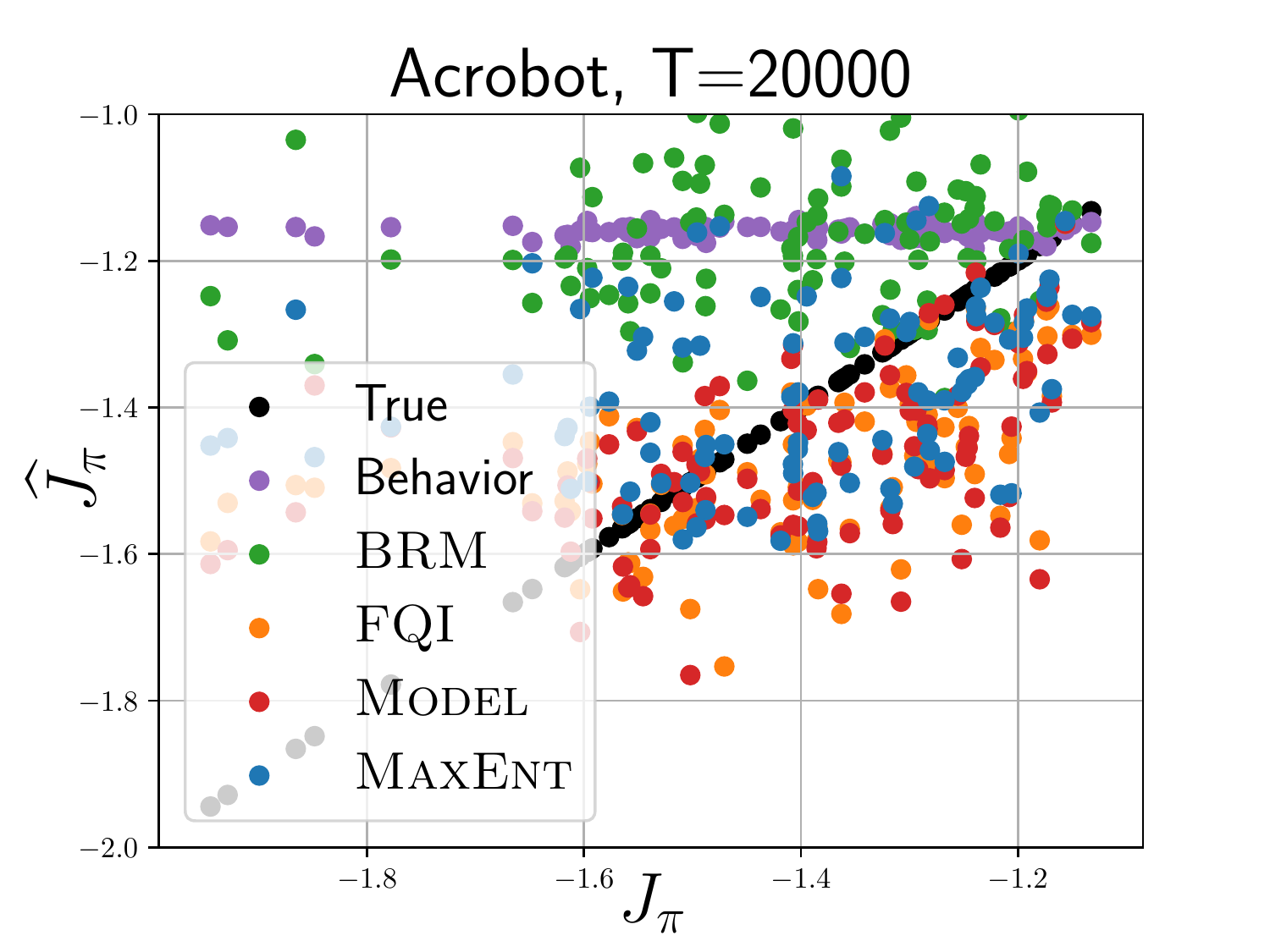}
\caption{Left: experiments on LQ control, where $\varepsilon$ and $\varepsilon_1$ control the suboptimality of the behavior and target policies, respectively. The more suboptimal policies are difficult to evaluate with \textsc{BRM} and \textsc{FQI}. Right: predicted vs. true value on Acrobot for 100 target policies evaluated using the same behavior policy. The errors are: Behavior 0.24 ($\pm 0.17$), \textsc{BRM} 0.23 ($\pm 0.17)$, \textsc{FQI} 0.14 ($\pm 0.10$), \textsc{Model} 0.13 ($\pm 0.11$), \textsc{MaxEnt} 0.15 ($\pm 0.14$). 
}
\label{fig:lqr}
\end{figure}
{\bf Linear quadratic regulator.} 
We evaluate our approach on the linear quadratic (LQ) control system in \cite{dean2019sample}, where stationary distributions, policies, and transition dynamics are Gaussian. We only evaluate a model-based approach here (as well as \textsc{FQI} and \textsc{BRM}) since the model fully constrains the solution, and solve a constrained optimization problem to ensure positive-definite covariances (see Appendix~\ref{app:experiments} for full details). 
We generate policies by solving the optimal control problem for the true dynamics and noisy costs, where $\varepsilon$ controls the noise for $\beta$ and $\varepsilon_1$  controls the noise for $\pi$. The results are shown in Figure~\ref{fig:lqr} (left) for ten values of $\varepsilon$ and three values of $\varepsilon_1$. While $\varepsilon$ does not seem to affect the OPE performance, the error increases with  $\varepsilon_1$ for \textsc{BRM} and \textsc{FQI}. 

{\bf Acrobot} \citep{sutton1996generalization} is a simple episodic discrete-action physical control task. The system includes two links and two joints, one of which is actuated. 
The goal is to swing the lower link up to a given height.
We set the reward at each time step to the negative distance between the joint to its target height, and to 100 when the lower link reaches its target height. Each episode ends after 500 steps, or after the target height is reached, after which we reset.
 The observations are link positions and velocities; we featurize them using the multivariate Fourier basis of order 3 as described in \cite{konidaris2011value}. For this task, we partially train 101 policies, set $\beta$ to the first policy, and evaluate the remaining policies.  
 The results are shown in Figure~\ref{fig:lqr} (right). In this case, \textsc{BRM} predictions seem more correlated to the behavior policy than to the target. The other methods are better correlated with the target policy, but have somewhat high error. 
 Possible reasons for this are the episodic nature of the environment, and the true underlying dynamics being only locally linear.


\section{Conclusion and future work}

We have presented a new approach to batch policy evaluation in average-reward MDPs. For linear MDPs, we have provided a finite-sample bound on the OPE error, which extends the previous results for discounted and episodic settings. In a more general setting with non-linear rewards and approximately linear feature dynamics, we have proposed a maximum-entropy approach to finding stationary distributions with function approximation.
Given that the linear MDP assumption is fairly restrictive, one important direction for future work is extending the framework beyond linear functions. Another direction we are planning to explore is applying this framework to policy optimization. 
Finally, note that the maximum-entropy objective corresponds to minimizing the KL-divergence between the target distribution and the uniform distribution, and we can easily minimize the KL divergence to other distributions instead. While the maximum entropy objective is justified in some cases (see Appendix~\ref{app:entropy}), our formulation allows us to incorporate other prior knowledge and constraints when available, and this is another direction for future work.

\section*{Broader impact}

In general, when learning from a batch of data produced by a fixed behavior policy, we may inherit the biases of that policy, and our models may not generalize beyond the support of the data distribution. In our paper, we circumvent this issue by assuming that the information sufficient for evaluating and optimizing policies is contained in some known features, and that the behavior policy is exploratory enough in the sense that it spans those features. These assumptions may not always hold when applying the method in practice.

\bibliography{biblio}

\newpage
\appendix

\section{Off-policy evaluation dual objective}
\label{sec:ope_lagrangian}

We formulate the estimation of the stationary state distribution $\mu_\pi(s)$ given a policy $\pi(a|s)$ as a maximum-entropy problem subject to matching feature expectations under the linearity assumption:
\begin{align}
\min_{\mu} & \; \sum_s \mu(s) \ln \mu(s) \\
{\rm s.t.} &  \;
\sum_{s, a} \mu(s) \pi(a|s) \phi(s, a)^\top (I - M_\pi) = b_\pi^\top  \label{eq:app_constraint} \\
&  \; \sum_s \mu(s) = 1
\end{align}
Let $\phi(s, \pi) = \sum_a \pi(a|s) \phi(s, a)$ be the expected state features under the target policy. For a fixed / given $M_\pi$, the Lagrangian of the above objective is:
\begin{align*}
L(\mu, \theta, \lambda) = \sum_s \mu(s) \ln \mu(s)  + b_\pi^\top \theta -  \sum_{s} \mu(s)\phi(s, \pi)^\top (I - M_\pi) \theta - \lambda(\sum_s \mu(s) - 1)
\end{align*}
Setting the gradient of $L(\mu, \theta)$ w.r.t. $\mu(s)$ to zero, we get
\begin{align*}
0 &= \ln \mu(s) + 1 - \phi(s, \pi)^\top (I - M_\pi ) \theta - \lambda \\
\mu(s) & = \exp( \phi(s, \pi)^\top (I - M_\pi ) \theta + \lambda - 1)
\end{align*}
Because $\sum_s\mu(s) = 1$, we get
\[
1 - \lambda = \ln \sum_s \exp(\phi(s, \pi)^\top (I - M_\pi) \theta) := F(\theta | M_\pi),
\]
where $F(\theta | M_\pi)$ is the log-normalizer.  
By plugging this expression for $\mu$ into the Lagrangian, we get the following dual maximization objective in $\theta$:
\begin{align*}
D(\theta) &:= \sum_s \mu(s) (\phi(s, \pi)^\top (I - M_\pi ) \theta - F(\theta| M_\pi)) 
 + b_\pi^\top \theta
    - \sum_s \mu(s) \phi(s, \pi)^\top (I - M_\pi ) \theta \\
    &= b_\pi^\top \theta - F(\theta |M_\pi) \,.
\end{align*}

\section{Model error}
\label{app:model_error}

\subsection{Preliminaries}

Our error analysis relies on similar techniques as the finite-sample analysis in \citet{pmlr-v97-lazic19a}. We first state some useful results.
\begin{lemma} [Lemma A.1 in \citep{pmlr-v97-lazic19a}]
\label{lem:doob}
Let Assumption~\ref{ass:mixing} hold, 
and
let $\{(s_t,a_t)\}_{t=1}^T$ 
be the state-action sequence obtained when following the behavior policy $\beta$ from an initial distribution $d_0$. For $t\in [T]$,
let $X_t$ be a binary indicator vector with a non-zero at the linear index of the state-action pair $(s_t, a_t)$.
Define for $i \in [T]$,
\begin{align*}
B_i & = \E \left[ \sum_{t=1}^T X_t | X_1, ..., X_i \right], \quad \text{ and }\,\,
B_0  = \E \left[ \sum_{t=1}^T X_t\right].
\end{align*}
Then, $(B_i)_{i=0}^T$ is a vector-valued martingale: $\E[B_i-B_{i-1}|B_0,\dots,B_{i-1}]=0$ for $i=1,\dots,T$,  
and $\norm{B_i - B_{i-1}}_1 \le 4 \kappa$ holds for $i\in [T]$.
\end{lemma}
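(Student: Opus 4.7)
The plan is to treat this as a standard Doob-martingale construction with a telescoping decomposition, and then use the Markov property together with the mixing assumption to bound the per-step increments by a geometric series that sums to $O(\kappa)$.

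For the martingale property, I would invoke the tower rule componentwise. Set $\mathcal{F}_i = \sigma(X_1,\dots,X_i)$ and let $Z=\sum_{t=1}^T X_t$, so that by construction $B_i = \E[Z\mid \mathcal{F}_i]$ and $B_0 = \E[Z]$. Since $\mathcal{F}_{i-1}\subseteq \mathcal{F}_i$, each scalar component satisfies $\E[B_i\mid \mathcal{F}_{i-1}] = \E[\E[Z\mid \mathcal{F}_i]\mid \mathcal{F}_{i-1}] = \E[Z\mid \mathcal{F}_{i-1}] = B_{i-1}$. Conditioning on $B_0,\dots,B_{i-1}$ (a coarser $\sigma$-algebra than $\mathcal{F}_{i-1}$) preserves this by another application of the tower property, giving the first claim.

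For the increment bound, I would split the sum defining $B_i$ into three regimes. For $t<i$, $X_t$ is measurable w.r.t.\ both filtrations and cancels. For $t\geq i$, the Markov property gives $\E[X_t\mid \mathcal{F}_i]=\E[X_t\mid X_i]=(\Pi_\beta^{t-i})^\top X_i$ and similarly $\E[X_t\mid \mathcal{F}_{i-1}]=(\Pi_\beta^{t-i+1})^\top X_{i-1}$. Reindexing $k=t-i$, I obtain the telescoping representation
\begin{equation*}
B_i - B_{i-1} \;=\; \sum_{k=0}^{T-i}\Big[(\Pi_\beta^{k})^\top X_i \;-\; (\Pi_\beta^{k+1})^\top X_{i-1}\Big].
\end{equation*}
Each bracketed term is the difference of two probability distributions on state-action pairs.

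To bound each summand, I would compare both distributions to the stationary distribution $d_\beta$ using the triangle inequality: $\|(\Pi_\beta^{k})^\top X_i - (\Pi_\beta^{k+1})^\top X_{i-1}\|_1 \leq \|(\Pi_\beta^{k})^\top X_i - d_\beta\|_1 + \|d_\beta - (\Pi_\beta^{k+1})^\top X_{i-1}\|_1$. Iterating Assumption~\ref{ass:mixing} $k$ (respectively $k+1$) times against each point-mass initial distribution yields bounds of the form $\|X_i - d_\beta\|_1\,e^{-k/\kappa}\leq 2e^{-k/\kappa}$ and similarly for the other term. Summing the resulting geometric series gives a bound proportional to $1/(1-e^{-1/\kappa})$, which for $\kappa\geq 1$ is at most $2\kappa$ by the elementary inequality $e^{-1/\kappa}\leq 1 - 1/(2\kappa)$ (itself a consequence of $e^{-x}\leq 1-x+x^2/2$ on $[0,1]$), delivering the $4\kappa$ bound.

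The main obstacle is the last step: squeezing out the precise constant $4\kappa$ rather than a slightly worse $O(\kappa)$ bound. This hinges on (i) noticing that the telescoping identity pairs point-mass initial conditions whose $\ell_1$-distances to $d_\beta$ are each bounded by $2$, and (ii) invoking the correct elementary bound on $1-e^{-1/\kappa}$. The rest of the argument—Markov property, Doob tower rule, triangle inequality—is mechanical once the telescoping form is in hand.
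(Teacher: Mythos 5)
The paper never proves this lemma itself --- it is imported verbatim from Lemma A.1 of \citet{pmlr-v97-lazic19a}, and the only supporting material in the text is the explicit formula for $B_i$ displayed immediately after the statement. Your construction reproduces exactly that formula: the tower-property argument for the martingale claim is correct, the cancellation of the terms with $t<i$ and the Markov-property reduction $\E[X_t\mid X_1,\dots,X_i]=(\Pi_\beta^{t-i})^\top X_i$ are correct, and your telescoping form of $B_i-B_{i-1}$ matches the paper's displayed expression for $B_i$. So the route is the intended one.

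The one genuine issue is the final constant. Bounding each summand by detouring through $d_\beta$ on both sides gives $\|(\Pi_\beta^{k})^\top X_i - (\Pi_\beta^{k+1})^\top X_{i-1}\|_1 \le 2e^{-k/\kappa}+2e^{-(k+1)/\kappa}\le 4e^{-k/\kappa}$, and summing the geometric series with your estimate $1/(1-e^{-1/\kappa})\le 2\kappa$ yields $8\kappa$, not $4\kappa$; even the sharper estimate $1/(1-e^{-1/\kappa})\le 1+\kappa$ only gives $4\kappa+2$. To land on exactly $4\kappa$ one writes the summand as $(\Pi_\beta^{k})^\top\bigl(X_i-\Pi_\beta^\top X_{i-1}\bigr)$ and applies the contraction directly to this single mean-zero difference of two probability distributions, getting $2e^{-k/\kappa}$ per term and $2\cdot 2\kappa=4\kappa$ in total. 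That step requires the mixing contraction to hold for an arbitrary pair of distributions (as it does in the assumption of \citet{pmlr-v97-lazic19a}), whereas Assumption~\ref{ass:mixing} as stated here pins one of the two distributions to $d_\beta$, so your detour through $d_\beta$ is actually forced by the assumption as written and the factor-of-two loss is the price. This affects nothing downstream beyond absolute constants, but as written your argument establishes $\|B_i-B_{i-1}\|_1\le 8\kappa$ rather than the claimed $4\kappa$.
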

The constructed martingale is known as the Doob martingale underlying the sum $\sum_{t=1}^T X_t$. %
Let $\Pi_\beta$ be the transition matrix for state-action pairs when following $\beta$. 
Then, for $t=0,\dots,m-1$, $\E[X_{t+1} | X_t] = \Pi_\beta^\top X_t$ and by the Markov property, for any $i\in [T]$,
\begin{align*}
B_i & 
= \sum_{t=1}^i X_t + \sum_{t=i+1}^{T} \E[X_t | X_i] 
= \sum_{t=1}^i X_t + \sum_{t=1}^{T - i} (\Pi_\beta^{t})^{\top} X_i  \quad \text{and} \quad
B_0  = \sum_{t=1}^T (\Pi_\beta^t)^\top X_0 \,.
\end{align*}
It will be useful to define another Doob martingale as follows:
\begin{align}
    Y_i &= \sum_{t=2}^i X_{t-1}X_t^\top + \sum_{t=i+1}^T \E[X_{t-1}X_t^\top |X_i] 
    = \sum_{t=2}^i X_{t-1}X_t^\top + \sum_{t=i+1}^{T-i} {\rm diag}(X_i^\top \Pi_\beta^{t-1}) \Pi_\beta \label{eq:dooby}\\
    Y_0& = \sum_{t=1}^T \E[X_{t-1} X_t^\top ] = \sum_{t=1}^T {\rm diag}(d_0^\top \Pi_\beta^{t-1}) \Pi_\beta \label{eq:dooby0}
\end{align}
where $d_0$ is the initial state-action distribution. 
The difference sequence can again be bounded as $\norm{Y_i - Y_{i-1}}_{1,1} \leq 4 \kappa$ under the mixing assumption (see Appendix D.2.2 of \citet{pmlr-v97-lazic19a} for more details). 

Let $(\cF_k)_k$ be a filtration and define $\E_k[\cdot] := \E[\cdot|\cF_k]$.
\begin{theorem}[Matrix Azuma \citep{tropp2012user}]
\label{thm:azuma}
Consider a finite $(\cF)_k$-adapted sequence $\{X_k\}$ of Hermitian matrices of dimension $m$, and a fixed sequence $\{A_k\}$ of Hermitian matrices that satisfy $\E_{k-1} X_k = 0$ and $X_k^2 \preceq A_k^2$ almost surely.  Let $v = \norm{\sum_k A_k^2}$. Then for all $t \geq 0$, 
\[ P\bigg(\lambda_{\max}\bigg( \sum_k X_k \bigg) \geq t\bigg) \leq m \cdot \exp(-t^2 / 8v) \,. \]
\end{theorem}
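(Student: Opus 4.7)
The plan is to reduce the tail bound to a scalar optimization via the matrix Laplace transform method of Ahlswede--Winter and Tropp. For any $\theta > 0$, Markov's inequality applied to $e^{\theta \lambda_{\max}(\cdot)}$, combined with $e^{\theta \lambda_{\max}(M)} = \lambda_{\max}(e^{\theta M}) \leq \mathrm{tr}\, e^{\theta M}$ for Hermitian $M$, yields
\[
P\!\left(\lambda_{\max}(S_n) \geq t\right) \;\leq\; e^{-\theta t}\, \E\!\left[\mathrm{tr}\exp(\theta S_n)\right], \qquad S_n := \sum_{k=1}^n X_k.
\]
The remainder of the argument is devoted to controlling the trace-exponential moment generating function $\E[\mathrm{tr}\exp(\theta S_n)]$, and then optimizing $\theta$.

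The key structural tool is Lieb's concavity theorem, which states that for any fixed Hermitian $H$ the map $A \mapsto \mathrm{tr}\exp(H + \log A)$ is concave on positive-definite $A$. Writing $S_n = S_{n-1} + X_n$, conditioning on $\cF_{n-1}$, and applying Jensen's inequality to this concave function gives
\[
\E_{n-1}\!\left[\mathrm{tr}\exp(\theta S_{n-1} + \theta X_n)\right] \;\leq\; \mathrm{tr}\exp\!\left(\theta S_{n-1} + \log \E_{n-1}[\exp(\theta X_n)]\right).
\]
Iterating this inequality backwards through the filtration $\cF_n, \cF_{n-1}, \dots, \cF_0$ yields the matrix subadditivity-of-cumulants bound
\[
\E[\mathrm{tr}\exp(\theta S_n)] \;\leq\; \mathrm{tr}\exp\!\left(\sum_k \log \E_{k-1}[\exp(\theta X_k)]\right),
\]
the noncommutative analogue of the scalar factorization $\E[e^{\theta \sum X_k}] = \prod_k \E[e^{\theta X_k}]$ available in the independent case.

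Next I would prove a matrix Azuma--Hoeffding moment generating function lemma: under $\E_{k-1}[X_k] = 0$ and $X_k^2 \preceq A_k^2$, one has $\E_{k-1}[\exp(\theta X_k)] \preceq \exp(c\theta^2 A_k^2)$ in the semidefinite order for some absolute constant $c$. This is obtained by lifting the scalar Hoeffding inequality $e^{\theta x} \leq \cosh(\theta a) + (x/a)\sinh(\theta a)$ (valid for $|x| \leq a$) to Hermitian matrices via the spectral transfer rule, taking conditional expectation to kill the linear term (since $\E_{k-1}[X_k] = 0$), and using $\cosh(u) \leq \exp(u^2/2)$. Because the matrix logarithm is operator monotone on positive-definite matrices, this transfers to $\log \E_{k-1}[\exp(\theta X_k)] \preceq c\theta^2 A_k^2$. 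The factor $8$ in the stated bound arises from the particular constant $c$ that emerges when passing from the semidefinite domination $X_k^2 \preceq A_k^2$ (which does not give a uniform spectral bound on $X_k$ itself) to a Hoeffding-type MGF control via symmetrization.

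Assembling the ingredients,
\[
\E[\mathrm{tr}\exp(\theta S_n)] \;\leq\; \mathrm{tr}\exp\!\Big(c\theta^2 \sum_k A_k^2\Big) \;\leq\; m \exp(c\theta^2 v),
\]
using $\mathrm{tr}\exp(M) \leq m\exp(\lambda_{\max}(M))$ and $v = \norm{\sum_k A_k^2}$. Substituting into the Markov bound yields $P(\lambda_{\max}(S_n) \geq t) \leq m \exp(-\theta t + c\theta^2 v)$; minimizing over $\theta$ gives the claim with the stated constants. The principal obstacle is the subadditivity step: because $X_k$ and $S_{k-1}$ do not commute, one cannot directly factor $\exp(\theta S_{k-1} + \theta X_k)$ as a product of exponentials, and the gap is closed only by invoking Lieb's concavity theorem, a nontrivial result from matrix analysis and quantum information theory. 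The matrix Hoeffding MGF lemma is a secondary technical hurdle, but reduces cleanly to scalar Hoeffding via spectral calculus.
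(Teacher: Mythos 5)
The paper does not prove this statement at all: it is imported verbatim (as Theorem~\ref{thm:azuma}) from \citet{tropp2012user} and used as a black box in the proof of Lemma~\ref{lemma:model_error}. So there is no in-paper argument to compare against; what you have written is, in substance, a faithful sketch of Tropp's own proof --- the matrix Laplace transform bound $P(\lambda_{\max}(S_n)\ge t)\le e^{-\theta t}\,\E[\mathrm{tr}\exp(\theta S_n)]$, the subadditivity-of-cumulants step via Lieb's concavity theorem and conditional Jensen, a matrix Hoeffding-type bound on the conditional MGF, and optimization over $\theta$. You also correctly locate the source of the constant $8$ (rather than $2$) in the symmetrization needed because $X_k$ and $A_k$ need not commute.

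Two small imprecisions worth fixing if you were to write this out. First, the displayed intermediate bound $\E[\mathrm{tr}\exp(\theta S_n)]\le \mathrm{tr}\exp(\sum_k \log\E_{k-1}[\exp(\theta X_k)])$ is not well-posed for adapted (non-independent) sequences, since the right-hand side is random; the correct iteration dominates each conditional cumulant $\log\E_{k-1}[\exp(\theta X_k)]$ by the deterministic matrix $c\theta^2 A_k^2$ \emph{before} peeling off the next conditional expectation, using operator monotonicity of $\log$ together with monotonicity of $L\mapsto \mathrm{tr}\exp(H+L)$ in the semidefinite order. This is exactly why the theorem requires the $A_k$ to be a fixed (non-random) sequence. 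Second, the two-variable scalar inequality $e^{\theta x}\le\cosh(\theta a)+(x/a)\sinh(\theta a)$ cannot be lifted by the transfer rule when $X$ and $A$ do not commute; the working route is the single-variable transfer $\log\cosh(\theta X)\preceq \theta^2X^2/2\preceq\theta^2A_k^2/2$ applied after Rademacher symmetrization, which doubles $\theta$ and produces the factor $4$ in the exponent and hence the $8$ in the tail. Neither issue is a conceptual gap; both are resolved exactly as in Tropp's Section~7.
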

Equivalently, with probability at least $1-\delta$, $ \norm{ \sum_k X_k } \leq 2\sqrt{2v \ln (m / \delta) }$.
A version of the inequality for non-Hermitian matrices of dimension $m_1 \times m_2$ can be obtained by applying the theorem to a Hermitian dilation of $X$, 
 $\cD(X) = \big[\begin{smallmatrix}0 & X \\ X^* & 0 \end{smallmatrix} \big]$, which satisfies $\lambda_{\max}(\cD(X)) = \norm{X}$ and $\cD(X)^2 =  \big[\begin{smallmatrix}XX^* & 0 \\ 0 & X^*X \end{smallmatrix} \big]$.  In this case, we have that $v = \max \left(\norm{\sum_k X_k X_k^*}, \norm{\sum_k X_k^* X_k} \right)$. 
 
Let $\Phi$ be a $|\cS||\cA| \times (m+1)$ matrix whose rows correspond to bias-augmented feature vectors $\overline \phi(s, a)$ for each state-action pair $(s, a)$. Let $ \phi_i$ be the feature vector corresponding to the $i^{th}$ row of $\Phi$, and let $C_\Phi = \max_i \norm{ \phi_i}_2$. 
For any matrix $A$, we have 
\begin{align}
\label{eq:bounded_features}
\norm{\Phi^\top A \Phi}_2 = \bigg\| \sum_{ij} A_{ij} \phi_i \phi_j^\top \bigg\|_2 \leq \sum_{i, j} |A_{ij}| \norm{\phi_i \phi_j^\top}_2   \leq C_\Phi^2  \sum_{i, j} |A_{ij}|  = C_\Phi^2 \norm{A}_{1, 1}\,.
\end{align}
 
 \subsection{Proof of Lemma~\ref{lemma:model_error}}
 \begin{proof}
 Let $\Phi$ be a $|\cS||\cA| \times (m+1)$ matrix whose rows correspond to bias-augmented feature vectors $\overline \phi(s, a)$.
 Let $D_\beta = {\rm diag}(d_\beta)$. Let $\tilde d_\beta$ be the empirical data distribution, and $\tilde D_\beta = {\rm diag}(\tilde d_\beta)$. The true and estimated (concatenated) model parameters can be written as
\begin{align*}
    \widehat W_\pi &= (\Lambda + \Phi^\top \tilde D_\beta \Phi)^{-1} \Phi^\top \tilde D_\beta \tilde \Pi_\pi \Phi \\
    W_\pi &= (\Phi^\top D_\beta \Phi)^{-1} \Phi^\top D_\beta \Pi_\pi \Phi
\end{align*}
where $\Pi_\pi$ is the true state-action transition kernel under $\pi$, and $\tilde \Pi_\pi$ corresponds to empirical next-state dynamics $\tilde P$. 
For the true model satisfying $\Pi_\pi \Phi = \Phi M_\pi$, we have taken expectations over $d_\beta$ and taken advantage of Assumption~\ref{ass:features}.

Let $\Lambda = \alpha \Phi^T \tilde D_\beta \Phi$; in this case
\begin{align*}
    \widehat W_\pi &= \frac{1}{1 + \alpha}(\Phi^\top \tilde D_\beta \Phi)^{-1} \Phi^\top \tilde D_\beta \tilde \Pi_\pi \Phi
\end{align*}

We first bound the error for $(1 + \alpha) \widehat M_\pi$. The model error can be upper-bounded as:
\begin{align*}
    \norm{(1 + \alpha) \widehat W_\pi - W_\pi}_2 & \leq
    \norm{(\Phi^\top D_\beta \Phi)^{-1}\Phi^\top (\tilde D_\beta \tilde \Pi_\pi - D_\beta \Pi_\pi) \Phi}_2 \\
    & \;\; + \norm{\big((\Phi^\top \tilde D_\beta \Phi )^{-1} - (\Phi^\top D_\beta \Phi)^{-1} \big) \Phi^\top \tilde D_\beta \Pi_\pi \Phi}_2 \\
    & \leq \sigma^{-1}\norm{\Phi^\top (\tilde D_\beta - D_\beta) \Pi_\pi \Phi}_2 
     + \norm{(\Phi^\top \tilde D_\beta \Phi )^{-1} - (\Phi^\top D_\beta \Phi)^{-1} }_2 \norm{ \Phi^\top \tilde D_\beta \Pi_\pi \Phi}_2 \\
     & \leq \sigma^{-1}\norm{\Phi^\top (\tilde D_\beta - D_\beta) \Pi_\pi \Phi}_2 +
     C_\Phi^2 \norm{(\Phi^\top \tilde D_\beta \Phi )^{-1} - (\Phi^\top D_\beta \Phi)^{-1} }_2
\end{align*}
where the second inequality follows from Assumption~\ref{ass:features}, and the last inequality follows from \eqref{eq:bounded_features}.
We proceed to bound the two terms
\begin{align*}
    E_1 &= \sigma^{-1}\norm{\Phi^\top (\tilde D_\beta \tilde \Pi_\pi - D_\beta \Pi_\pi) \Phi}_2 \\
    E_2 &= \norm{(\Phi^\top \tilde D_\beta \Phi )^{-1} - (\Phi^\top D_\beta \Phi)^{-1} }_2
\end{align*}

{\bf Bounding $E_1$.}
Let $(Y_i)_i$ be the Doob martingale defined in \eqref{eq:dooby}-\eqref{eq:dooby0}, and let $\tilde \Pi_\beta$ be the empirical state-action transition matrix under the policy $\beta$. Note that $\tilde D_\beta \tilde \Pi_\beta = Y_T / T$. Furthermore, let $K^\pi$ be a $|\cS||\cA| \times |\cS||\cA|$ matrix defined as
\[
K^\pi_{(s, a), (s', a')} = 
\begin{cases}
\pi(a'|s) & \text{if } s' = s \\
0 & \text{otherwise}
\end{cases}
\]
Notice that $\tilde D_\beta \tilde \Pi_\beta K^\pi = \tilde D_\beta \tilde \Pi_\pi$ and $D_\beta \Pi_\beta K^\pi = D_\beta \Pi_\pi$. We can upper-bound $E_1$ as:
\begin{align*}
    \sigma^{-1}\norm{\Phi^\top (\tilde D_\beta \tilde \Pi_\beta - D_\beta \Pi_\beta) K^\pi \Phi}_2  
    &= \frac{1}{\sigma T}\norm{\Phi^\top(Y_T - Y_0)K^\pi \Phi}_2 
    + \frac{1}{\sigma T} \norm{ \Phi^\top (Y_0 - T D_\beta \Pi_\beta) K^\pi \Phi}_2
\end{align*}
Note that $\Phi^\top Y_i K^\pi \Phi$ is a matrix-valued martingale, whose difference sequence is bounded by 
\begin{align*}
\norm{(\Phi^\top (Y_i - Y_{i-1}) K^\pi \Phi)^2}_2 & \leq C_\Phi^4 \norm{(Y_i - Y_{i-1}) K^\pi}_{1,1}^2 \leq 16 C_\Phi^4 \kappa ^2
\end{align*}
where we have used \eqref{eq:bounded_features} and the fact that rows of $K^\pi$ sum to 1. 
Applying the matrix-Azuma theorem~\ref{thm:azuma}, we have that with probability at least $\delta$,
\[
\frac{1}{\sigma T}\norm{\Phi^\top(Y_T - Y_0)K^\pi \Phi}_2 \leq 
8 C_\Phi^2 \sigma^{-1}\kappa \sqrt{2 \ln(2(m+1) / \delta) / T} \,.
\]
Using the mixing Assumption~\ref{ass:mixing}, and letting $d_0$ be the initial state-action distribution,
\begin{align*}
    \frac{1}{\sigma T} \norm{ \Phi^\top (Y_0 - T D_\beta \Pi_\beta)K^\pi \Phi}_2 
    & \leq \frac{1}{\sigma T} \sum_{t=1}^T \norm{ \Phi^\top {\rm diag}( d_0^\top \Pi_\beta^t - d_\beta^\top ) \Pi_\beta K^\pi \Phi}_2 \\
    & \leq \frac{C_\Phi^2}{\sigma T} \sum_{t=1}^T \norm{{\rm diag}( d_0^\top \Pi_\beta^t - d_\beta^\top ) \Pi_\pi}_{1,1}   \\
   & \leq \frac{C_\Phi^2}{\sigma T} \sum_{t=1}^T \exp(-t / \kappa) \norm{d_0 - d_\beta}_1 
   \leq \frac{2 C_\Phi^2 \kappa} { \sigma T }
\end{align*}
Thus we get that with probability at least $1 - \delta$,
\begin{align*}
    E_1 & \leq 8 C_\Phi^2 \sigma^{-1}\kappa \left( \sqrt{2 \ln(2(m+1) / \delta) / T} + 1 / T \right)
\end{align*}

{\bf Bounding $E_2$.} To bound $E_2$, we first rely on the Woodbury identity to write
\begin{align*}
 &   (\Phi^\top \tilde D_\beta \Phi )^{-1} - (\Phi^\top D_\beta \Phi)^{-1} \\
    &= (\Phi^\top D_\beta \Phi  + \Phi^\top (D_\beta - \tilde D_\beta) \Phi)^{-1} - (\Phi^\top D_\beta \Phi)^{-1} \\
    &= (\Phi^\top D_\beta \Phi)^{-1}\big( (\Phi^\top D_\beta \Phi)^{-1} + ( \Phi^\top (\tilde D_\beta - D_\beta) \Phi)^{-1}\big)^{-1} (\Phi^\top D_\beta \Phi)^{-1} 
\end{align*}
\begin{align*}
E_2 & \leq \sigma^{-2} \norm{ \big( (\Phi^\top D_\beta \Phi)^{-1} + ( \Phi^\top (\tilde D_\beta - D_\beta) \Phi)^{-1}\big)^{-1}}_2 \\
& \leq \sigma^{-2} \norm{ \Phi^\top (\tilde D_\beta - D_\beta) \Phi}_2 \\
& \leq  8 \sigma^{-2} C_\Phi^2  \kappa \left( \sqrt{2 \ln(2(m+1) / \delta) / T} + 1 / T \right)
\end{align*}
where the second line follows because $ (\Phi^\top D_\beta \Phi)^{-1} \succ 0$, 
and the last line follows by similar concentration arguments as those for $E_1$ for the matrix-valued martingale $\Phi^\top {\rm diag}(B_i) \Phi$, with probability at least $1-\delta$.

{\bf Bounding $\norm{\widehat W_\pi - W_\pi}_2$}. 
Putting previous terms together, with probability at least $1-\delta$, for an absolute constant $C$,
\begin{align}
\label{eq:regularized_error}
    \norm{(1 + \alpha) \widehat W_\pi - W_\pi}_2 \leq C C_\Phi^4 \kappa \sigma^{-2} \sqrt{2 \ln (2(m+1)/\delta) / T}
\end{align}
Furthermore, we have:
\begin{align*}
    \norm{\widehat W_\pi - W_\pi}_2 &\leq \norm{(1 + \alpha) \widehat W_\pi - W_\pi}_2 + \alpha \norm{\widehat W}_2 \\
    & \leq 
    C C_\Phi^4 \kappa \sigma^{-2}\sqrt{2 \ln (2(m+1)/\delta) / T} + \alpha \sigma^{-1} C_\Phi^2
\end{align*}
Setting $\alpha = C_\Phi^2 \sigma^{-1}\kappa / \sqrt{T}$ gives the final result.

{\bf Bounding $\norm{w - \hat w}$.} For linear rewards $r(s, a) = \phi(s, a)^\top w$, we estimate the parameters $w$ using linear regression. 
Abusing notation, assume that the feature matrix $\Phi$ does not include bias for the purpose of this section. 
The true and estimated parameters $w$ and $\hat w$ satisfy
\begin{align}
w &= (\Phi^\top D_\beta \Phi)^{-1} \Phi^\top D_\beta r \\
\hat w &= (\Phi^\top \tilde D_\beta \Phi)^{-1} \Phi^\top \tilde D_\beta r
\end{align}
where $r = \Phi w$ is the length-$|\cS||\cA|$ vector of rewards. We have that
\begin{align}
    \norm{w - \hat w} &= \norm{(\Phi^\top D_\beta \Phi)^{-1} \Phi^\top (D_\beta - \tilde D_\beta) \Phi w} +
    \norm{((\Phi^\top D_\beta \Phi)^{-1}  - (\Phi^\top \tilde D_{\beta} \Phi)^{-1}) \Phi^\top \tilde D_\beta \Phi w}
\end{align}
Using the bounds from the previous section, we get that for a constant $C_w$, with probability at least $1-\delta$,
\begin{align}
    \norm{w - \hat w} \leq C_w C_\Phi^2 \sigma^{-2} \kappa(\sqrt{2 \ln (2m/\delta)/T}) \norm{w}
\end{align}

\end{proof}

\section{Proof of Theorem~\ref{thm:ope_error} (policy evaluation error)}
\label{app:ope_error}

\begin{proof}
Assuming that the optimization problem is feasible, the following holds for the resulting distribution $\hat d_\pi(s, a) = \hat \mu_\pi(s) \pi(a|s)$:
\[
\sum_{s, a} \hat d_\pi(s,a) \overline \phi(s, a)^\top = \sum_{s, a} \hat d_\pi(s,a) \overline \phi(s, a)^\top \widehat W_\pi \,.
\]

Assume that the reward is linear in the features, $r(s, a) = w^\top \phi(s, a)$, and let $\hat w$ be the corresponding parameter estimate. Let $\underline w = \left[\begin{smallmatrix}w \\ 0 \end{smallmatrix}\right]$ and let $\underline{\hat w} = \left[\begin{smallmatrix}\hat w \\ 0 \end{smallmatrix}\right]$.

The policy evaluation error is:
\begin{align*}
J_\pi - \widehat J_\pi &= \sum_{s, a} (d_\pi(s, a) \overline \phi(s, a)^\top \underline w - \hat d_\pi(s, a) \overline \phi(s, a)^\top  \underline {\hat w}) \\
&= \sum_{s, a} (d_\pi(s, a) - \hat d_\pi(s, a)) \overline \phi(s, a)^\top \underline w  + \sum_{s, a} \hat d_\pi(s, a) \phi(s, a)^\top (\underline w -  \underline{\hat w} ) \,.
\end{align*}
The norm of the second term is bounded by $C_\Phi \norm{w - \hat w}_2$. We proceed to bound the first term.

Let $W_\pi^\alpha = \frac{1}{1+\alpha} W_\pi$, and define $e^\top := \sum_{s, a} d(s, a) \overline \phi(s, a)^\top$ and $\hat e^\top := \sum_{s, a} \hat d(s, a) \overline \phi(s, a)^\top$. The first term can be written as:
\begin{align*}
 (e^\top - \hat e^\top) \underline  w  
&= e^\top (W^\alpha_\pi + \alpha W^\alpha_\pi) w - \hat e^\top \widehat W_\pi) \underline w  \\
& = (e - \hat e)^\top W^\alpha_\pi \underline w 
 +  \hat e^\top (W_\pi^\alpha - \widehat W_\pi)  \underline w 
 + \alpha e^\top  W_\pi^\alpha \underline w \\
&=  (e - \hat e)^\top( W^\alpha_\pi)^2 \underline w \\
& \quad +  \hat e^\top (W_\pi^\alpha - \widehat M_\pi)  (I + M^\alpha_\pi) \underline w \\
& \quad + \alpha e^\top  W_\pi ^\alpha ( I + (W_\pi^\alpha)^2)\underline w \\
&= \lim_{K \rightarrow \infty} 
(e - \hat e)^\top( W^\alpha_\pi)^K  \underline w +
\left( 
\hat e^\top (W_\pi^\alpha - \widehat W_\pi)   + 
\alpha e^\top W_\pi^\alpha 
\right)
\left( \sum_{i=0}^{K} (W_\pi^\alpha)^i \right) \underline w  
\end{align*}

In order to evaluate the infinite sum, we first show that $W_\pi$ is non-expansive in a $\Sigma_\pi$-weighted norm (and hence $W^\alpha_\pi$ is contractive):
\begin{align}
    \Sigma_\pi 
    &:= \E_{(s, a) \sim d_\pi}[ \overline \phi(s, a) \overline \phi(s, a)^\top] \notag \\
    &= \E_{(s, a) \sim d_\pi}[ \E_{(s', a') \sim \Pi_\pi(\cdot | s, a)} [ \overline \phi(s', a') \overline \phi(s', a')^\top]] \notag \\
    &= \E_{(s, a) \sim d_\pi}[ W_\pi^\top \overline \phi(s, a) \overline \phi(s, a)^\top W_\pi] + V \notag \\
    &= W_\pi^\top \Sigma_\pi W_\pi + V \label{eq:lyap}
\end{align}
where $V \succeq 0$.
Multiplying each side of \eqref{eq:lyap} by $\Sigma_\pi^{-1/2}$ from the left- and right-hand side, we get that
\begin{align*}
    I &= \Sigma_\pi^{-1/2} W_\pi^\top \Sigma_\pi^{1/2} \Sigma_\pi^{1/2} W_\pi \Sigma_\pi^{-1/2} + \Sigma_\pi^{-1/2}V \Sigma_\pi^{-1/2}\\
    1 & \geq \norm{\Sigma_\pi^{1/2} W_\pi \Sigma_\pi^{-1/2}}_2^2
\end{align*}

Thus we have that $\norm{\Sigma_\pi^{1/2} W^\alpha_\pi \Sigma_\pi^{-1/2}}_2 \leq (1 + \alpha)^{-1}$, and we can compute the infinite sum as: 
\begin{align*}
(W_\pi^\alpha)^i &= \Sigma_\pi^{-1/2} \left( \Sigma_\pi^{1/2} W^\alpha_\pi \Sigma_\pi^{-1/2} \right)^i \Sigma_\pi^{1/2}\\
\norm{\sum_{i=0}^\infty (W_\pi^\alpha)^i } 
& \leq \sum_{i=0}^\infty \norm{\Sigma_\pi^{-1/2}} \norm{ \Sigma_\pi^{1/2} W^\alpha_\pi \Sigma^{-1/2}}^i \norm{\Sigma_\pi^{1/2}}
 \leq  C_\Phi \sigma_{\pi}^{-1/2} (1 + \alpha) 
\end{align*}

The error can now be written as
\begin{align*}
    |J_\pi - \widehat J_\pi | &= 
 \left(\norm{\hat e}_2 \norm{W_\pi^\alpha - \widehat W_\pi}_2 + \alpha \norm{e^\top W_\pi^\alpha}_2 + \norm{u}_2)\right) C_\Phi \sigma_\pi^{-1/2}(1 + \alpha) \norm{w}_2 + C_\Phi \norm{w - \hat w}_2 
\end{align*}
Note that $\norm{e}_2 \leq C_\Phi$ and $\norm{\hat e}_2 \leq C_\phi$. 
Set $\alpha = C_\Phi^2 \sigma^{-1} \kappa /\sqrt{T}$ as in the previous section. 
From \eqref{eq:regularized_error}, we have that
\begin{align}
   \norm{W_\pi^\alpha - \widehat W_\pi}_2 \leq(1 + \alpha) C C_\Phi^4 \kappa \sigma^{-2} \sqrt{2 \ln (2(m+1)/\delta) / T}
\end{align}
Plugging in the model errors and $\alpha$ and combining terms we get the final result in the theorem. 

\end{proof}

\section{Stationary distribution with large entropy}
\label{app:entropy}

In this paper, we try to find the distribution over states that maximizes the entropy under some linear constraints, and use it as a proxy for the stationary distribution. In this section, we provide some theoretical evidence that at least when the probability transition over the states is sufficiently random, the stationary distribution tends to have large entropy.

For simplicity, we focus on finite-state Markov chains instead of MDPs. Consider a Markov chain with state space $\mathcal{S}$ and probability transition matrix $P$. Let $S:=|\mathcal{S}|$. Then the stationary distribution $d$ satisfies $d^\top = d^\top P$. In this section, we assume that each row of $P$ is sampled uniformly at random from the simplex over $\mathcal{S}$, i.e., $\Delta_{\mathcal{S}}$, independently of other rows. We prove the following result, which shows that as $S$ increases, the stationary distribution $d$ converges to a uniform distribution over the states at a rate $\mathcal{O}(1/\sqrt{S})$.

\begin{theorem}\label{thm:stationary_dist}
Let $P$ be the probability transition matrix of a Markov chain with finite state space $\mathcal{S}$, and assume that rows of $P$ are sampled independently and uniformly at random from $\Delta_\mathcal{S}$. Then, with probability at least $1-\delta$, the stationary distribution $d$ of the Markov chain satisfies
\[
\left\|\frac{d}{\|d\|_2} - \frac{1}{\sqrt{S}} {\bf 1}\right\|_2 \le \frac{2\sqrt{10}}{\delta\sqrt{S}},
\]
where ${\bf 1}$ denotes an all-one vector.

\end{theorem}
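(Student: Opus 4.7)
The plan is to exploit the fact that a row of $P$ is $\mathrm{Dirichlet}(1,\ldots,1)$, whose entries have mean $1/S$ and variance $\tfrac{S-1}{S^2(S+1)} = O(1/S^2)$. Write $P = \tfrac{1}{S}\mathbf{1}\mathbf{1}^\top + E$, so the rows $\xi_i$ of $E$ are i.i.d.\ mean-zero vectors on a shifted simplex. Combining the stationarity identity $d^\top P = d^\top$ with normalization $d^\top\mathbf{1}=1$ and $\tfrac{1}{S}\mathbf{1}\mathbf{1}^\top d = \tfrac{1}{S}\mathbf{1}$ gives the fixed-point relation
\[
v := d - \tfrac{1}{S}\mathbf{1} = E^\top d, \qquad \mathbf{1}^\top v = 0.
\]
By orthogonality $\|d\|_2^2 = 1/S + \|v\|_2^2$, and the estimate $\|v\|_2 \le \|E\|_{\mathrm{op}}\|d\|_2$ rearranges, whenever $\|E\|_{\mathrm{op}}<1$, to
\[
\|v\|_2^2 \le \frac{\|E\|_{\mathrm{op}}^2}{S\bigl(1-\|E\|_{\mathrm{op}}^2\bigr)}.
\]

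Next I would perform a direct geometric reduction: using $d^\top\mathbf{1}=1$,
\[
\Bigl\|\tfrac{d}{\|d\|_2}-\tfrac{\mathbf{1}}{\sqrt{S}}\Bigr\|_2^2 \;=\; 2 - \tfrac{2}{\sqrt{1+S\|v\|_2^2}} \;\le\; S\|v\|_2^2,
\]
where the last step uses the elementary inequality $1 - 1/\sqrt{1+x} \le x/2$. Chaining with the previous bound, the squared target quantity $\|X\|_2^2$, with $X := d/\|d\|_2 - \mathbf{1}/\sqrt{S}$, is at most $\|E\|_{\mathrm{op}}^2/(1-\|E\|_{\mathrm{op}}^2)$ whenever $\|E\|_{\mathrm{op}}<1$, and trivially at most $4$ always (as $X$ is a difference of unit vectors).

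To conclude I would take expectations via a two-case split on $\{\|E\|_{\mathrm{op}}\le 1/\sqrt{2}\}$ (on which $\|X\|_2^2 \le 2\|E\|_{\mathrm{op}}^2$) versus its complement (on which I use $\|X\|_2^2 \le 4$ together with $\mathbb{P}(\|E\|_{\mathrm{op}}^2 > 1/2)\le 2\,\mathbb{E}\|E\|_{\mathrm{op}}^2$ by Markov). A short calculation then yields
\[
\mathbb{E}\|X\|_2^2 \;\le\; 10\,\mathbb{E}\|E\|_{\mathrm{op}}^2.
\]
Plugging in the random-matrix estimate $\mathbb{E}\|E\|_{\mathrm{op}}^2 \le 4/S$, which holds because $E$ is $S\times S$ with i.i.d.\ centered rows of per-entry variance $O(1/S^2)$, gives $\mathbb{E}\|X\|_2^2 \le 40/S$, and hence $\mathbb{E}\|X\|_2 \le 2\sqrt{10}/\sqrt{S}$ by Jensen. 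Markov's inequality applied directly to $\|X\|_2$ then delivers the stated $\|X\|_2 \le 2\sqrt{10}/(\delta\sqrt{S})$ with probability at least $1-\delta$.

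The main obstacle is the operator-norm second-moment bound $\mathbb{E}\|E\|_{\mathrm{op}}^2 \le 4/S$: the naive Frobenius estimate gives only $\mathbb{E}\|E\|_F^2 = (S-1)/(S+1) \le 1$, loose by a factor of $S$. A genuine matrix-valued concentration inequality is required (e.g., matrix Bernstein on the $S$ i.i.d.\ rank-one summands $\xi_i\xi_i^\top$ of $E^\top E$), exploiting that the variance proxy $\|\mathbb{E}[E^\top E]\|_{\mathrm{op}} = 1/(S+1)$ is already $O(1/S)$ and that each summand is almost-surely bounded in operator norm by the squared row norm.
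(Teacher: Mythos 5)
Your reduction is clean and, apart from one step, correct: the fixed-point identity $v=d-\tfrac1S\mathbf 1=E^\top d$ with $\mathbf 1^\top v=0$, the rearrangement $\|v\|_2^2\le \|E\|_{\mathrm{op}}^2/\bigl(S(1-\|E\|_{\mathrm{op}}^2)\bigr)$, the geometric inequality $\|X\|_2^2\le S\|v\|_2^2$, the case split giving $\E\|X\|_2^2\le 10\,\E\|E\|_{\mathrm{op}}^2$, and the final Jensen--Markov step are all fine. But the entire argument hinges on $\E\|E\|_{\mathrm{op}}^2\le 4/S$, which you yourself flag as the main obstacle, and this is a genuine gap rather than a routine citation. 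The constant $4/S$ is the asymptotic Bai--Yin-type value $(2\sigma\sqrt S)^2$ with $\sigma^2\approx 1/S^2$; no off-the-shelf inequality gives it non-asymptotically, and the tool you propose does not deliver it even up to constants. Matrix Bernstein applied to $E^\top E=\sum_i\xi_i\xi_i^\top$ with the almost-sure bound $\|\xi_i\xi_i^\top\|\le\|\xi_i\|_2^2\le 1$ yields a deviation term of order $\log S$, which is vacuous here; even after truncating at the typical row norm $\|\xi_i\|_2^2\approx 1/S$ (itself requiring a separate argument, since Dirichlet rows have heavier-than-Gaussian coordinates), the Bernstein bound is of order $(\log S)/S$, so at best you would prove a bound of the form $C\sqrt{\log S}/(\delta\sqrt S)$, not $2\sqrt{10}/(\delta\sqrt S)$. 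Sharper results that remove the logarithm (Seginer, Latala, Bandeira--van Handel) assume independent entries, whereas within each Dirichlet row the entries are dependent, so they do not apply directly. In short: the skeleton is sound, but the one quantitative input that carries all the difficulty is asserted, not proved, and the suggested route to it fails.

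It is instructive to contrast this with the paper's proof, which is built precisely to avoid any operator-norm concentration. Instead of perturbing the stationarity equation, the paper works with the Gram matrix $\widehat\Sigma=PP^\top$ and its expectation $\Sigma=\tfrac{S-1}{S(S+1)}I+\tfrac1S\mathbf 1\mathbf 1^\top$: the population eigengap is exactly $1$, the top population eigenvector is $\mathbf 1/\sqrt S$, and the Frobenius-norm error $\E\|\widehat\Sigma-\Sigma\|_F^2\le 5/S$ can be computed in closed form from fourth moments of the Dirichlet distribution (your observation that $\E\|E\|_F^2=(S-1)/(S+1)$ is $\Theta(1)$ is exactly why one must square first: the cross terms in $PP^\top$ average out). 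Markov's inequality plus the Davis--Kahan theorem (in its Frobenius-norm form) then gives the stated bound with the explicit constant. If you want to salvage your approach, you either need a genuinely sharp bound on $\E\|E\|_{\mathrm{op}}^2$ for this dependent-row ensemble, or you should accept a polylogarithmic loss in the final rate.
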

\begin{proof}
We denote the uniform distribution over the simplex in $\R^S$ by $\mathcal{U}$. The distribution $\mathcal{U}$ is a special case of Dirichlet distribution~\citep{hazewinkel2001dirichlet}. In this proof, we make use of the following properties of $\mathcal{U}$.
\begin{lemma}\label{lem:dirichlet}
~\citep{hazewinkel2001dirichlet} Let $x\sim\mathcal{U}$ and $x_i$ be the $i$-th coordinate of $x$. Then we have
\begin{align*}
    &\E[x_i] = \frac{1}{S},\quad \E[x_i^2] = \frac{2}{S(S+1)},\quad \E[x_i^4] = \frac{24}{S(S+1)(S+2)(S+3)}  \\
    &\E[x_i x_j] = \frac{1}{S(S+1)}, \quad \E[x_i^2x_j^2] = \frac{4}{S(S+1)(S+2)(S+3)},~\forall i\neq j.
\end{align*}
\end{lemma}
This lemma gives us the following direct corollary.
\begin{corollary}\label{cor:dirichlet}
Suppose that $x$ and $y$ are two independent samples from $\mathcal{U}$. Then we have
\begin{align}
    \E[\|x\|_2^2] & = \frac{2}{S+1} \label{eq:exp_x2} \\
    \E[x^\top y] & = \frac{1}{S} \label{eq:exp_xy}  \\
    \E[\|x\|_2^4]  & = \frac{4(S+5)}{(S+1)(S+2)(S+3)}  \label{eq:exp_x4}  \\
    \E[(x^\top y)^2]  &= \frac{S+3}{S(S+1)^2} \label{eq:exp_xy2}
\end{align}
\end{corollary}
\begin{proof}
\[
\E[\|x\|_2^2] = \E\left[\sum_{i=1}^S x_i^2 \right] = \frac{2}{S+1}.
\]
\[
\E[x^\top y] = \E\left[ \sum_{i=1}^S x_iy_i \right] = \sum_{i=1}^S \E[x_i]\E[y_i] = \frac{1}{S}.
\]
\[
\E[\|x\|_2^4] = \E\left[(\sum_{i=1}^S x_i^2)^2\right] = \sum_{i=1}^S\E[x_i^4] + \sum_{i\neq j}\E[x_i^2 x_j^2] = \frac{4(S+5)}{(S+1)(S+2)(S+3)}.
\]
\[
    \E[(x^\top y)^2] = \E\left[(\sum_{i=1}^S x_iy_i)^2\right] = \sum_{i=1}^S \E[x_i^2 y_i^2] + \sum_{i\neq j}\E[x_ix_jy_iy_j] = \frac{S+3}{S(S+1)^2}.
\]
\end{proof}

Now we turn to the proof of Theorem~\ref{thm:stationary_dist}. In the following, we define $\widehat{\Sigma}:=PP^\top$, $\Sigma := \E[\widehat{\Sigma}]$, and let $p_i$ be the $i$-th column of $P^\top$. For a PSD matrix $M$, we define $\lambda_i(M)$ as its $i$-th largest eigenvalue. Since $P$ is a probability transition matrix, we know that $\lambda_1(\widehat{\Sigma}) = 1$, and the corresponding top eigenvector is $\frac{d}{\|d\|_2}$. We then analyze $\Sigma$. Since $\Sigma_{i,j} = \E[p_i^\top p_j]$, according to Corollary~\ref{cor:dirichlet}, we know that $\Sigma_{i,i} = \frac{2}{S+1}$, $\forall i$ and $\Sigma_{i,j} = \frac{1}{S}$, $\forall i\neq j$. Thus
\[
\Sigma = \frac{S-1}{S(S+1)} I + \frac{1}{S}{\bf 1}{\bf 1}^\top.
\]
Then, we know that $\lambda_1(\Sigma) = 1+\frac{S-1}{S(S+1)}$, $\lambda_i(\Sigma) = \frac{S-1}{S(S+1)}$, $\forall i\ge 2$. Then, the gap between the top eigenvalue of $\Sigma$ and the second largest eigenvalue of $\Sigma$ is
\begin{align}\label{eq:eigengap}
\lambda_1(\Sigma) - \lambda_2(\Sigma) = 1.
\end{align}
The top eigenvector of $\Sigma$ is $\frac{1}{\sqrt{S}}{\bf 1}$. Next, we proceed to bound the difference between $\Sigma$ and $\widehat{\Sigma}$. In particular, we bound $\E[\|\widehat{\Sigma} - \Sigma\|_F^2]$. We have
\begin{align}
    \E[\|\widehat{\Sigma} - \Sigma\|_F^2] &= \sum_{i=1}^S \left(\E[\widehat{\Sigma}_{i,i}^2] - \E[\widehat{\Sigma}_{i,i}]^2\right) + \sum_{i\neq j}\left( \E[\widehat{\Sigma}_{i,j}^2] - \E[\widehat{\Sigma}_{i,j}]^2 \right) \nonumber  \\
    &= \sum_{i=1}^S \left(\E[\|p_i\|_2^4] - \E[\|p_i\|_2^2]^2\right) + \sum_{i\neq j}\left( \E[(p_i^\top p_j)^2] - \E[p_i^\top p_j]^2 \right) \nonumber \\
    &=\frac{4S(S-1)}{(S+1)^2(S+2)(S+3)} + \frac{(S-1)^2}{S(S+1)^2} \label{eq:bound_f_norm_1} \\
    &\le \frac{5}{S},  \label{eq:bound_f_norm_2}
\end{align}
where in~\eqref{eq:bound_f_norm_1} we use Corollary~\ref{cor:dirichlet}. Thus, we have
\begin{align}\label{eq:bound_f_norm_3}
    \E[\|\widehat{\Sigma} - \Sigma\|_F] \le \sqrt{\E[\|\widehat{\Sigma} - \Sigma\|_F^2]} \le \sqrt{\frac{5}{S}}.
\end{align}
According to Markov's inequality, with probability at least $1-\delta$,
\begin{align}\label{eq:bound_f_norm_4}
    \|\widehat{\Sigma} - \Sigma\|_F \le \frac{\sqrt{5}}{\delta\sqrt{S}}.
\end{align}
We then apply Davis-Kahan Theorem~\citep{davis1970rotation} (see also Theorem 2 in~\cite{yu2015useful}) and obtain
\[
\sqrt{1 - \langle \frac{d}{\|d\|_2}, \frac{1}{\sqrt{S}}{\bf 1} \rangle^2} \le \frac{2\|\widehat{\Sigma} - \Sigma \|_F}{\lambda_1(\Sigma) - \lambda_2(\Sigma)} = 2\|\widehat{\Sigma} - \Sigma \|_F,
\]
where for the equality we use~\eqref{eq:eigengap}. This implies
\begin{align}
\left \| \frac{d}{\|d\|_2} - \frac{1}{\sqrt{S}}{\bf 1} \right\|_2 &= \sqrt{2 - 2\langle \frac{d}{\|d\|_2}, \frac{1}{\sqrt{S}}{\bf 1} \rangle}  \nonumber \\
&\le \sqrt{2} \sqrt{1 - \langle \frac{d}{\|d\|_2}, \frac{1}{\sqrt{S}}{\bf 1} \rangle^2} \nonumber \\
&\le 2\sqrt{2} \|\widehat{\Sigma} - \Sigma \|_F. \label{eq:pre_final}
\end{align}
Then we can complete the proof by combining~\eqref{eq:bound_f_norm_4} and~\eqref{eq:pre_final}.
\end{proof}

\section{Experiment details for linear quadratic control}

\label{app:experiments}
In a linear-quadratic (LQ) control problem, the dynamics are linear-Gaussian in states $x$:
\begin{align}
    x_{t+1} &= Ax_t + B a_t + w_t, \;\; w_t \sim \mathcal{N}(0, W) \,.
\end{align}
Assume that all policies are linear-Gaussian: $\pi(a|x) = \mathcal{N}(a|Kx, C)$. In this case, assuming that the policy $\pi$ is stable (the spectral radius of $A+BK$ is less than 1), the stationary state distribution is
\begin{align}
    \mu(x) = \mathcal{N}(0, S), \qquad {\rm where} \;\; S= (A+BK) S (A+BK)^\top + W\,.
\end{align}
Given an estimate of the  dynamics parameters $(\widehat A, \widehat B, \widehat W)$, maximum-entropy OPE corresponds to the following convex problem:
\begin{align}
    \max_{S \succeq 0} & \;\;\ln \det (S) \\
    {\rm s.t.} & \;\;S = (\widehat A + \widehat B K) S (\widehat A + \widehat B K)^\top + \widehat W 
\end{align}
Note that the constraint corresponds to that in the dual formulation of LQ control presented in \cite{pmlr-v80-cohen18b}.
We solve the above problem using cvxpy \citep{diamond2016cvxpy}. The problem will only be feasible if $\rho(\widehat A + \widehat BK) < 1$, where $\rho(\cdot)$ denotes the spectral radius. Furthermore, when the system is controllable, the constraint fully specifies the solution and so the maximum-entropy objective plays no role. 

In LQ control problems, rewards are quadratic: 
\begin{align}
    r(x, a) = -x^\top Q x -a^\top R a, \quad Q, R \succ 0,
\end{align}
Thus to evaluate policies, we can estimate $Q$ and $R$, and estimate the policy value as 
\[
\widehat J_\pi = {\rm trace}(S \hat Q) + {\rm trace}(( K S K^\top + C) \hat R) \,.
\]

In our experimental setup, we produce the behavior policies by solving for the optimal controller for true dynamics $(A, B, W)$, true action costs $R$, and state costs corrupted as
\begin{align*}
    \tilde Q = Q + \varepsilon^2 U^\top U
\end{align*}
 $U$ is a matrix of the same size as $Q$ whose entries are generated uniformly at random. Given the corresponding optimal linear feedback matrices $\tilde K$, we set behavior policies to $\beta(a|x) = \mathcal{N}(a| \tilde Kx, 0.1 I)$, and we make target policies greedy, i.e. $a = \tilde Kx$. 

When evaluating policies using \textsc{BRM} and \textsc{FQI}, we use the following features for a policy 
$\pi(a|x) = \mathcal{N}(a|Kx, C)$:
\begin{align*}
    \phi(s, a) &= \vect\left(\begin{bmatrix} xx^\top  & xa^\top \\ ax^\top & aa^\top\end{bmatrix} 
   \right), \qquad 
    \phi(s, \pi) = \vect\left( 
    \begin{bmatrix} xx^\top & xx^\top K^\top \\ Kxx^\top  & Kxx^\top K^\top + C 
    \end{bmatrix}     
    \right) \,.
\end{align*}

\section{Batch policy optimization}
\label{app:policy_opt}

To optimize policies, we can maximize the entropy-regularized expected reward $\sum_{s,a}d(s, a)(r(s, a) - \tau \ln d(s, a))$ subject to the same feature constraints as before. Unfortunately, in this case the feature expectation constraints are no longer linear, as the model $\widehat M_\pi$ depends on the optimization variables through $\pi(a|s) = \frac{d(s, a)}{\sum_{a'} d(s, a')}$.
One possible optimization approach is an EM-like algorithm that alternates between optimizing $d(s, a)$ for fixed $\widehat M_\pi$, and reestimating $\widehat M_\pi$ for $\pi(a|s) \propto d(s, a)$.
A simpler alternative, proposed in \cite{yang2019reinforcement}, is to assume that we have state-only features $\psi(s)$ whose expectation is a linear function of the state-action features:
\begin{align*}
    \E_{s' \sim P(\cdot | s, a)} [ \psi(s')] = \phi(s, a)^\top M
\end{align*}
where $M$ is a matrix of appropriate dimensions. Note that now $M$ does not depend on the policy, and can be kernelized as in  \citet{yang2019reinforcement}. With this, we formulate batch policy optimization as:
\begin{align}
    \min_{d \in \Delta_{\cS \times \cA}} & \; \sum_{s, a} d(s, a) (- 
    \phi(s, a)^\top \hat w + \tau \ln d(s, a)) \\
{\rm s.t.} & \; 
\sum_{s, a} d(s, a) \phi(s, a)^\top \widehat M = \sum_{s, a} d(s, a) \psi(s)^\top
\end{align}
The optimal solution takes the form
\begin{align}
    d(s, a| \theta_d, \hat w, \widehat M) &= \exp\left(\frac{1}{\tau} \phi(s, a)^\top \hat w + \frac{1}{\tau}(\phi(s, a)^\top \widehat M - \psi(s)^\top) \theta_d - F_\tau(\theta_d, \hat w, \widehat M)\right)
\end{align}
where $F_\tau(\theta_d, \hat w, \widehat M)$ is the log-partition function, and $\theta_d = \arg \min_\theta F(\theta, \hat w, \widehat M)$.

\end{document}